\documentclass[twoside]{article}

\usepackage{amssymb}
\usepackage[utf8]{inputenc}
\usepackage{newunicodechar}
\newunicodechar{̧}{\c{}}
\usepackage{algorithm}
\usepackage{color}
\usepackage{algorithmic}
\usepackage{hyperref}

\usepackage{bm}
\usepackage{graphicx}
\usepackage{booktabs,multirow,makecell}
\usepackage{amsthm}
\newtheorem{theorem}{Theorem}[section]

\newtheorem{lemma}[theorem]{Lemma}
\newtheorem{corollary}[theorem]{Corollary}
\newtheorem{definition}[theorem]{Definition}
\newtheorem{assumption}[theorem]{Assumption}

\usepackage{multirow}
\usepackage{xcolor,soul}
\usepackage{subfigure}

\newcommand{\R}{\mathbb{R}}
\newcommand{\one}{\textbf{1}}
\newcommand{\Norm}{\mathrm{Norm}}

\newcommand{\meanpm}[2]{$#1~{\pm~ #2}$}       
\newcommand{\bestpm}[2]{$\mathbf{#1~{\pm~ #2}}$} 
\usepackage[accepted]{aistats2026}
\begin{document}

%

%

\twocolumn[

\aistatstitle{Robust Generalization with Adaptive Optimal Transport Priors for Decision-Focused Learning}

\aistatsauthor{ Haixiang Sun \And Andrew L. Liu }

\aistatsaddress{Purdue University} ]

\begin{abstract}
Few-shot learning requires models to generalize under limited supervision while remaining robust to distribution shifts. Existing Sinkhorn Distributionally Robust Optimization (DRO) methods provide theoretical guarantees but rely on a fixed reference distribution, which limits their adaptability. We propose a Prototype-Guided Distributionally Robust Optimization (PG-DRO) framework that learns class-adaptive priors from abundant base data via hierarchical optimal transport and embeds them into the Sinkhorn DRO formulation. This design enables few-shot information to be organically integrated into producing class-specific robust decisions that are both theoretically grounded and efficient, and further aligns the uncertainty set with transferable structural knowledge. Experiments show that PG-DRO achieves stronger robust generalization in few-shot scenarios, outperforming both standard learners and DRO baselines.
\end{abstract}

\section{Introduction}

Few-shot learning aims to enable models to generalize to new tasks with only a handful of labeled examples. In many real-world scenarios, one may have access to abundant regular data from common conditions, but only a very limited number of examples under rare or extreme circumstances. To prepare models for such situations, it is crucial to design approaches that can learn effectively even when supervision is scarce. While metric-based \cite{NIPS2017_cb8da676, Vinyals2016MatchingNF, Wang2018CosFaceLM} and meta-learning approaches \cite{Chen2020MetaBaselineES, Finn2017ModelAgnosticMF} have shown strong results in clean benchmarks, they struggle in practice when data are imbalanced, shifted, or adversarially perturbed. This fragility directly impacts the reliability of the decision rules derived from few-shot learners.

In practice, the data distributions encountered by few-shot learners are rarely stable. Even small perturbations in input space, regardless of sensor noise, domain shifts, or adversarial manipulation, can push examples across tight decision margins and yield unstable predictions \cite{hendrycks2018benchmarking, Koh2020WILDSAB, madry2018towards}. Robust decision making therefore requires models not only to interpolate among a few examples, but also to anticipate the worst-case deviations that could occur around them \cite{Esfahani2015DatadrivenDR, Goldblum2019RobustFL}. Preparing for such perturbations reduces overfitting to idiosyncrasies of the limited support set and encourages invariances that transfer across environments. In this sense, robustness is not a luxury but a necessity: it determines whether few-shot learning methods can survive the realities of open-world deployment.


A principled approach to ensure robustness is distributionally robust optimization, which minimizes the worst-case risk over an ambiguity set centered at the empirical support distribution \cite{Kuhn2024DistributionallyRO,Rahimian2019FrameworksAR}. The ambiguity set can be defined through different notions of distance between the historical data and a surrogate distribution. A widely used choice is the Wasserstein distance, and its entropic regularization leads to the Sinkhorn distance, which offers favorable computational properties \cite{Blanchet2019ConfidenceRI,Cuturi2013SinkhornDL,Esfahani2015DatadrivenDR}. Existing studies, such as \cite{azizian2023regularization,wang2025sinkhorn}, reformulate the resulting dual problems into low-dimensional parameterizations, making optimization tractable. Nevertheless, these approaches remain tied to a single predefined reference distribution, which restricts their ability to adapt when the data distribution shifts. This fixed-reference design represents a fundamental limitation in real-world few-shot learning scenarios, where the support distribution is sparse and rarely representative of future test conditions.


In this work, we address two central challenges of robust decision making under limited data: achieving reliable generalization from scarce supervision and maintaining stability under distributional shift. We propose Prototype-Guided Distributionally Robust Optimization (PG-DRO), a class-adaptive framework that constructs class-specific priors via optimal transport from base prototypes and incorporates them directly into the robust decision rule. By aligning novel classes with structural knowledge distilled from a rich base dataset, PG-DRO improves generalization when only a handful of labeled examples are available. At the same time, it explicitly accounts for worst-case perturbations within each class, enhancing robustness and generating informative outliers that highlight the decision boundaries most vulnerable to shift. These properties make PG-DRO valuable for domains where models must extrapolate from scarce and noisy observations while remaining stable under unexpected change across many real-world applications.

\paragraph{Contribution}
\begin{itemize}

    \item We introduce Prototype-Guided Distributionally Robust Optimization (PG-DRO), a novel decision-making framework for low-data regimes that leverages hierarchical optimal transport to construct class-adaptive priors within entropic DRO, enabling scalable and robust learning under limited supervision. 

    \item We demonstrate that this framework yields strong robust generalization: it maintains accuracy under perturbations and distribution shifts, particularly benefiting few-shot minority classes.

    \item We also provide theoretical analysis establishing conditions for robust optimization with adaptive priors, and validate our framework through extensive experiments showing consistent gains in both accuracy and robustness under distributional shifts.
\end{itemize}
\section{Related works}

\paragraph{Few-shot Learning}
Few-shot learning (FSL) aims to rapidly adapt a model to novel classes from only a handful of labeled examples by leveraging transferable structure learned from base tasks \cite{Finn2017ModelAgnosticMF,Snell2017PrototypicalNF}. Beyond the canonical setting, a growing body of work investigates domain-adaptive FSL, where base and novel classes come from different or even mismatched domains, requiring methods to bridge distribution gaps and improve cross-domain transfer \cite{10377083,Pal2023DomainAF,Zhao2020DomainAdaptiveFL}. More recently, researchers have also begun to study robust FSL, seeking resilience to adversarial perturbations and distributional shifts \cite{goldblum2020adversarially,sagawadistributionally,wang2021on}. Such robustness-oriented approaches have been explored across downstream tasks, including natural language understanding \cite{nookala2023adversarial}, image classification \cite{9878634}, and reinforcement learning \cite{greenberg2023train}, yet they often remain limited by task-agnostic assumptions about uncertainty sets, leaving open the question of how to construct class-adaptive robust priors under few-shot regimes.

\paragraph{Optimal Transport}

Optimal Transport (OT) provides a geometric framework for comparing probability measures by modeling the minimal cost of transporting mass between them \cite{Cuturi2013SinkhornDL,Peyr2018ComputationalOT}. A widely used extension is the entropically regularized formulation, which introduces a smoothing term and leads to the so-called Sinkhorn distance \cite{Carlier2015ConvergenceOE,pmlr-v32-cuturi14}. Formally, the entropic OT problem seeks a coupling $T\in\mathbb{R}^{B\times N}$ solving
\begin{equation}
\label{eq_ot}
\min_{T\geq 0};\langle C,T\rangle+\varepsilon H(T),\quad \text{s.t. }T\mathbf{1}=\alpha,~~T^\top\mathbf{1}=\beta,
\end{equation}
where $H(T)=-\sum_{bn}T_{bn}\log T_{bn}$ is the entropy and $\alpha,\beta$ are the prescribed marginals.

OT has since become a versatile tool across several domains. In domain adaptation, OT-based methods align source and target distributions either at the feature level or jointly with labels \cite{chang2022unified,Courty2014OptimalTF,Damodaran2018DeepJDOTDJ}, with extensions to multi-source, target-shift settings \cite{Redko2018OptimalTF} and deep architectures \cite{ijcai2020p299,9157821}. OT also works in representation learning, which underpins cross-domain embedding alignment \cite{Chen2020GraphOT}, self-supervised objectives \cite{caron2020unsupervised,pmlr-v202-shi23j}, and structure-preserving mappings \cite{AlvarezMelis2018GromovWassersteinAO}. Beyond these, OT has been applied to downstream tasks ranging from generative modeling with Wasserstein losses \cite{Deshpande2018GenerativeMU,Kolouri2019GeneralizedSW,Simsekli2018SlicedWassersteinFN} to structured prediction and sequence modeling. Beyond classical OT applications, several works adapt OT for few-shot learning. Hierarchical OT adapts base-class statistics to novel classes for better priors \cite{guo2022adaptive}, while \cite{guo2022learning,NEURIPS2019_8b5040a8,9156564} use OT for structured representation and prototype aggregation. Our work resonates with these methods in using hierarchical OT to capture cross-class relations, but differs by embedding it into a Sinkhorn DRO framework to obtain class-adaptive robust logits rather than solely performing distribution calibration or metric alignment.

\paragraph{Robust Optimization} It is formulated as a min-max
problem mathematically, 
\begin{equation}
    \min_{\theta\in\Theta}\ \sup_{P\in \mathcal{U}(\mu,\nu)}
    \ \mathbb{E}_{(\mathbf{x},\mathbf{y})\sim P}\big[\ell_\theta(\mathbf{x},\mathbf{y})\big],
\end{equation}
where the ambiguity set $\mathcal{U}$ collects all candidate joint distributions $P$ whose distance to a reference distribution does not exceed a prescribed radius.

For current DRO, common choices for the ambiguity set $\Theta$ include $\phi$-divergences \cite{10.5555/3157096.3157344,Namkoong2016VariancebasedRW}, Wasserstein balls \cite{Esfahani2015DatadrivenDR,Gao2016DistributionallyRS}, and kernel metrics such as MMD \cite{Feydy2018InterpolatingBO,staib2019distributionally,zhu2021kernel}. Wasserstein DRO leverages the geometry of the sample space and provides strong guarantees, but its dual entails a hard supremum, with worst-case distributions degenerating to finite discrete measures, problematic under small support \cite{Gao2016DistributionallyRS}. Entropic regularization smooths the Wasserstein dual’s supremum into a log-sum-exp, yielding the Sinkhorn distance solvable via fast matrix scaling \cite{Cuturi2013SinkhornDL,Peyr2018ComputationalOT}. Recent analyses further show that this regularization leads to a single scalar dual variable while preserving convexity and tractability \cite{azizian2023regularization,wang2025sinkhorn}. Decision-focused DRO trains prediction and decisions jointly \cite{chenreddy2024endtoend,costa2023distributionally, Donti2017TaskbasedEM,Ma2024DifferentiableDR}, while adversarial training links to Wasserstein-DRO, unifying attacks with distributional models \cite{bai2023wasserstein,Deng2020AdversarialDT,Schmidt2018AdversariallyRG} and providing certificates \cite{sinha2018certifiable}.

However, despite their differences, all these approaches posit an ambiguity set centered around a single, fixed reference distribution, which in few-shot regimes can only be estimated from a handful of supports and thus is prone to misalignment with rare or shifted classes. Our method instead adapts the reference per class via adaptive OT, thereby retaining Sinkhorn DRO’s tractability while aligning the uncertainty set with transferable base-class structure.

\section{Preliminary}

We begin by recalling the formulation of distributionally robust optimization under entropic optimal transport. Let $\mathcal{X}$ denote the input space, let $\mu\in\mathcal{P}(\mathcal{X})$ be the empirical distribution of observed inputs, and let $\ell:\mathcal{Y}\rightarrow\mathbb{R}$ be the prediction loss function defined on the output space $\mathcal{Y}$. Consider a reference measure $\nu$ on $\mathcal{Y}$ and a ground cost function $c:\mathcal{X}\times\mathcal{Y}\rightarrow\mathbb{R}_{+}$ that encodes admissible perturbations between input $\mathbf{x}\in\mathcal{X}$ and output $\mathbf{y}\in\mathcal{Y}$. The entropic optimal transport distance between two distributions $\mu$ and $\nu$ is defined as
\begin{equation}
\begin{aligned}
W_\varepsilon(\mu,\nu)=\inf_{\gamma\in\Gamma(\mu,\nu)}&\mathbb{E}_{(\mathbf{x},\mathbf{y})\sim\gamma}\big[c(\mathbf{x},\mathbf{y})\big]\\&+\varepsilon\mathrm{KL}\left(\gamma\Vert \mu\otimes \nu\right),
\end{aligned}
\end{equation}
where $\Gamma(\mu,\nu)$ denotes the set of couplings between $\mu$ and $\nu$, $\varepsilon>0$ is the entropic regularization parameter, and $\mathrm{KL}(\cdot\Vert\cdot)$ is the Kullback–Leibler divergence.

\begin{definition}[Sinkhorn DRO]
The worst-case expected function $f$ over the Sinkhorn ball $\{\mu\in\mathcal{P}(\mathcal{X}):W_\varepsilon(\mu,\nu)\leq\rho\}$ admits the form
\begin{equation}
\label{dual_sinkhorn}
  V=\sup\limits_{W_\varepsilon(\mu,\nu)\leq\rho}\ \mathbb{E}_{\mathbf{x}\sim P}\mathbb{E}_{\mathbf{y}\sim\gamma_{\mathbf{x}}}\big[f(\mathbf{y})\big],
\end{equation}
where $\rho>0$ specifies the radius of the uncertainty set, $\nu$ is a fixed reference measure on $\mathcal{Y}$, and $\gamma_{\mathbf{x}}$ is the conditional distribution of $\mathbf{y}$ given $\mathbf{x}$ under some coupling $\Pi\in\Gamma(\mu,\nu)$.
\end{definition}

By strong duality, this problem can be equivalently expressed in terms of a convex dual formulation
\begin{equation}
V_D(\lambda)=\lambda\rho+\lambda\varepsilon\mathbb{E}_{\mathbf{x}}\left[\log\mathbb{E}_{\mathbf{y}\sim Q^{\nu}_{\varepsilon,\mathbf{x}}}\exp\left(\frac{f(\mathbf{y})}{\lambda\varepsilon}\right)\right],
\end{equation}
where $\lambda>0$ is a scalar dual variable, and $Q^{\nu}_{\varepsilon,\mathbf{x}}$ is a Gibbs kernel defined as
\begin{equation}
  \mathrm{d}Q^{\nu}_{\varepsilon,\mathbf{x}}(\mathbf{y})
  =\frac{\exp\left(-c(\mathbf{x},\mathbf{y})/\varepsilon\right)}{Z(\mathbf{x})}\mathrm{d}\nu(\mathbf{y}),
\end{equation}
with the normalizing constant $Z(\mathbf{x})=\int \exp\left(-c(\mathbf{x},\mathbf{y})/\varepsilon\right)\mathrm{d}\nu(\mathbf{y})$. 
This dual formulation highlights that Sinkhorn DRO reduces the infinite-dimensional adversarial optimization to the minimization of a one-dimensional convex function in $\lambda$, where robustness is achieved through the log-moment generating function of $\ell$ under the kernelized distribution $Q^{\nu}_{\varepsilon,\mathbf{x}}$. Intuitively, the adversary inflates the expected loss by softmax-smoothing over plausible perturbations of each sample, controlled jointly by the ground cost $c$ and the reference distribution $\nu$.

\begin{lemma}[Convexity and uniqueness of the dual minimizer; cf.~\cite{wang2025sinkhorn}, Thm.~1(III) and Lemma~3]
\label{lem:convexity-unique}
$V_D(\lambda)$ is convex in $\lambda$ on $[0,\infty)$. Moreover, unless $\ell$ is $\nu$-a.s.\ constant, $V_D(\lambda)$ is \emph{strictly} convex on $(0,\infty)$ and thus admits a unique minimizer $\lambda^\star>0$. In the degenerate case where $\ell$ is $\nu$-a.s.\ constant, the unique minimizer is attained at $\lambda^\star=0$.
\end{lemma}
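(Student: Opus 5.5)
The argument is a perspective-function argument. Write
\[
V_D(\lambda)=\lambda\rho+\mathbb{E}_{\mathbf{x}}\big[\lambda\varepsilon\,\Lambda_{\mathbf{x}}(1/(\lambda\varepsilon))\big],
\qquad
\Lambda_{\mathbf{x}}(t):=\log\mathbb{E}_{\mathbf{y}\sim Q^{\nu}_{\varepsilon,\mathbf{x}}}e^{t f(\mathbf{y})}.
\]
Here $f=\ell$ is the loss. The function $\Lambda_{\mathbf{x}}$ is the cumulant generating function of $f$ under the Gibbs kernel, and it is convex in $t$ by Hölder's inequality. The map $\lambda\mapsto\lambda g(1/\lambda)$ is the perspective of a convex $g$, so it is convex on $(0,\infty)$. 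Linear terms and expectations over $\mathbf{x}$ preserve convexity. At $\lambda=0$ we define $V_D(0)$ as the limit $\lambda\varepsilon\Lambda_{\mathbf{x}}(1/(\lambda\varepsilon))\to\operatorname{ess\,sup}_{Q_{\mathbf{x}}}f$, the standard convention in \cite{wang2025sinkhorn}. Convexity then extends to $[0,\infty)$ by lower semicontinuity of this closure.

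For strict convexity we differentiate twice. Put $t=1/(\lambda\varepsilon)$ and $h(\lambda)=\lambda\varepsilon\Lambda(t)$. Then
\[
h'(\lambda)=\varepsilon\big(\Lambda(t)-t\Lambda'(t)\big),
\qquad
h''(\lambda)=\varepsilon\, t^{2}\Lambda''(t)\,\frac{1}{\lambda}.
\]
The quantity $\Lambda''(t)$ is the variance of $f$ under the exponentially tilted measure $dQ^{t}_{\mathbf{x}}\propto e^{tf}dQ^{\nu}_{\varepsilon,\mathbf{x}}$. The kernel has the density $e^{-c/\varepsilon}/Z>0$ with respect to $\nu$. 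Hence the tilted measure is mutually absolutely continuous with $\nu$. Its variance is therefore positive unless $f$ is $\nu$-a.s.\ constant, and then $h''>0$ on $(0,\infty)$. For differentiation under the integral I would assume the finite exponential moments that make $V_D$ finite, as in \cite{wang2025sinkhorn}.

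For existence and positivity of the minimizer, note that $V_D(\lambda)\ge\lambda\rho+\mathbb{E}_{\mathbf{x}}\mathbb{E}_{Q}[f]$ by Jensen's inequality. Since $\rho>0$, $V_D$ is coercive as $\lambda\to\infty$, so a minimizer exists on $[0,\infty)$. To exclude $\lambda=0$ in the nonconstant case, look at the right derivative at $0$:
\[
\lim_{\lambda\downarrow0}h'(\lambda)=\varepsilon\lim_{t\to\infty}\big(\Lambda(t)-t\Lambda'(t)\big).
\]
The expression $\Lambda(t)-t\Lambda'(t)$ is minus the KL divergence of the tilted measure from $Q$. It tends to $\log Q(f=\operatorname{ess\,sup}f)$, which is $-\infty$ when the ess-sup is not attained with positive mass. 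In that case $V_D'(0^+)=-\infty$ and the minimizer is positive.

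This is the main obstacle. If $f$ has an atom at its essential supremum, the right derivative is finite. Positivity of $\lambda^\star$ then requires $\rho>-\varepsilon\,\mathbb{E}_{\mathbf{x}}\log Q_{\mathbf{x}}(f=\max f)$. I would import this as the precise condition of \cite{wang2025sinkhorn}, Lemma~3, where the radius is large enough relative to the entropic term, and state it explicitly. Uniqueness then follows from strict convexity.

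In the degenerate case $f\equiv C$ $\nu$-a.s., we get $V_D(\lambda)=\lambda\rho+C$. This function is strictly increasing for $\rho>0$, so the unique minimizer is $\lambda^\star=0$.
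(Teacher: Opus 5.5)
Your convexity argument (perspective of the cumulant generating function, closed at $\lambda=0$ by the essential supremum) is sound, as are the strict-convexity computation through the variance of $f$ under the tilted measure and the coercivity bound. The paper gives no proof of its own and defers to Wang et al.

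The gap is in the positivity step when $f$ has an atom at its essential supremum: your inequality points the wrong way. Your own limit gives
\[
V_D'(0^+)=\rho+\varepsilon\,\mathbb{E}_{\mathbf{x}}\log Q^{\nu}_{\varepsilon,\mathbf{x}}\big(f=\mathrm{ess\,sup}\,f\big).
\]
By convexity, $\lambda^\star>0$ exactly when this quantity is negative, i.e.\ when $\rho<\varepsilon\,\mathbb{E}_{\mathbf{x}}\log\big(1/Q^{\nu}_{\varepsilon,\mathbf{x}}(f=\mathrm{ess\,sup}\,f)\big)$. So the radius must be \emph{small} relative to the entropic term. If $\rho$ is at least this threshold, then $V_D'(0^+)\ge 0$ and strict convexity makes $V_D$ increasing on $[0,\infty)$. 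Hence $\lambda^\star=0$ is the unique minimizer: the budget suffices to move all mass onto the set where $f$ is maximal, so the Sinkhorn constraint is slack. The hypothesis you propose to import, $\rho>-\varepsilon\,\mathbb{E}_{\mathbf{x}}\log Q_{\mathbf{x}}(f=\max f)$, therefore forces $\lambda^\star=0$ rather than excluding it. Your reading of it as ``the radius is large enough'' is reversed as well.

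This also means the statement as written cannot be proved without an extra assumption, so you were right to flag the atom case. Here is a counterexample. Take $\mu=\delta_{\mathbf{x}}$ and $f\in\{0,1\}$ with $Q^{\nu}_{\varepsilon,\mathbf{x}}(f=1)=q\in(0,1)$. Then $V_D(\lambda)=\lambda\rho+\lambda\varepsilon\log(1-q+q e^{1/(\lambda\varepsilon)})$ and $V_D'(0^+)=\rho+\varepsilon\log q$. For $\rho\ge\varepsilon\log(1/q)$ the loss is non-constant, yet $\lambda^\star=0$.

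The correct repair is to add one of two hypotheses:
\begin{itemize}
\item the small-radius condition above; or
\item the assumption that $f$ does not attain its $\nu$-essential supremum on a set of positive $\nu$-measure. This covers $f$ unbounded above, e.g.\ nonzero linear logits under the Gaussian-mixture priors used in the paper's experiments.
\end{itemize}
Since $Q^{\nu}_{\varepsilon,\mathbf{x}}$ and $\nu$ are mutually absolutely continuous, the second option gives $V_D'(0^+)=-\infty$ for every $\mathbf{x}$. Under either hypothesis your argument then goes through unchanged.
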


Since $V_D(\lambda)$ is strictly convex and satisfies $V_D(\lambda)\ge \lambda\rho$, it is coercive for $\rho>0$ and hence $V_D(\lambda)\to\infty$ as $\lambda\to\infty$. Therefore the minimizer not only exists but is also unique, with $\lambda^\star=0$ exactly in the degenerate constant-loss case. This ensures a well-posed one-dimensional optimization for any fixed prior $\nu$. However, the statistics still hinge on how well $\nu$ reflects the target task. In few-shot settings, generally, such a rigid prior cannot encode any task-specific structure. To remedy this, we introduce hierarchical OT priors ${\nu_c}$ for different classes that transfer geometry and distributional knowledge from base classes, thereby enabling a class-specific robust decision guidance.

\section{Methodology}

\paragraph{Adaptive Optimal Transport Priors.}

While Sinkhorn DRO offers a principled route to distributional robustness, a fixed reference distribution $\nu$ is agnostic to the downstream few-shot task. It ignores class-specific geometry, which leads to a mismatch of ambiguity set under scarce supervision. To make robustness class-aware by leveraging the few-shot support information, we transfer knowledge from abundant base classes and construct class-adaptive priors via a hierarchical optimal transport procedure \cite{guo2022adaptive}.

Let the source domain contain $B$ classes, indexed by $b=1,\ldots,B$, with few-shot prototypes $\{\mathbf{x}_{b,i}\}_{i=1}^{m_b}$ for class $b$, class mean $\mu_b=\tfrac{1}{m_b}\sum_{i=1}^{m_b}\mathbf{x}_{b,i}$, and covariance $\Sigma_b=\tfrac{1}{m_b-1}\sum_{i=1}^{m_b}(\mathbf{x}_{b,i}-\mu_b)(\mathbf{x}_{b,i}-\mu_b)^\top$.
For a new few-shot task with labeled support $S_k=\{(\tilde{\mathbf{x}}_n,\mathbf{y}_n)\}_{n=1}^{N}$, we compute for each support example a soft-min Sinkhorn cost to every base class
\begin{equation}
  C_b(\tilde{\mathbf{x}}_n)= -\varepsilon_{\mathrm{s}} \log \sum_{i=1}^{m_b}\exp\!\Big(-\frac{1}{\varepsilon_{\mathrm{s}}}\, c(\tilde{\mathbf{x}}_n,\mathbf{x}_{bi})\Big),
\end{equation}
where $c(\cdot,\cdot)$ is the ground cost and $\varepsilon_{\mathrm{s}}$ is the entropic temperature. Stacking $\{C_b(\tilde{\mathbf{x}}_n)\}$ gives a cost matrix $\mathbf{C}\in\mathbb{R}^{B\times N}$.

At the class level, we solve the entropic OT problem \eqref{eq_ot} on $\mathbf{C}$ to obtain a transport plan $\mathbf{T}\in\mathbb{R}^{B\times N}$ that aligns support examples with base classes. Aggregating mass over the support of class $c$ yields mixture weights
\begin{equation}
  \tilde{w}_{bc}= \frac{\sum_{n:\,\ell_n=c} T_{bn}}{\sum_{b'=1}^{B}\sum_{n:\,\ell_n=c} T_{b'n}},
\end{equation}
and the resulting class-adaptive prior is the Gaussian mixture
\begin{equation}
  \nu_c \;=\; \sum_{b=1}^{B} \tilde{w}_{bc}\,\mathcal{N}(\mu_b,\Sigma_b).
\end{equation}

In summary, the hierarchical OT procedure yields class-adaptive prototype-guided priors $\{\nu_c\}$. Each prior is explicitly anchored on transferable prototypes from base classes and refined by the few-shot support set. Rather than serving as a generic preprocessing step, these priors act as structured guides that can be injected into the Sinkhorn DRO dual.

\paragraph{Prototype-Guided DRO (PG-DRO)}
Plugging $\nu_c$ into the Sinkhorn DRO kernel $Q^{\nu_c}_{\varepsilon,\mathbf{x}}$ yields the class-specific robust logit
\begin{equation}
\label{eq:class_dual}
V_c(\mathbf{x};\theta) =\min_{\lambda\ge 0}\Big\{\lambda\rho+\lambda\varepsilon\log\mathbb{E}_{\mathbf{y}\sim Q^{\nu_c}_{\varepsilon,\mathbf{x}}}\exp\big(\tfrac{f_c(\mathbf{y};\theta)}{\lambda\varepsilon}\big)\Big\}.
\end{equation}
This yields class-specific robust logits $V_c(\mathbf{x};\theta)$, which capture adversarial uncertainty in a manner consistent with the structural prior of class $c$, where $\theta$ denotes the parameter. Since the objective in $\lambda$ is smooth and strictly convex, \eqref{eq:class_dual} can be solved efficiently via a one-dimensional Newton step per class. Compared with a single fixed prior $\nu$, prototype-guided priors sharpen the ambiguity set of $\nu_c$ by some prototypes around class-consistent regions, align worst-case perturbations $\mathbf y \sim Q^{\nu_c}_{\varepsilon,\mathbf x}$ with transferable geometry, and deliver decision-focused gains through the robust logit $V_c(\mathbf x;\theta)$. This turns robustness from a global assumption into a class-level inductive bias.

\begin{corollary}[Multi-class convexity]\label{cor:multiclass-convexity}
For each class $c$, the dual objective $V_c(\mathbf{x};\theta)$ is (strictly) convex in $\lambda_c$ on $(0,\infty)$ in the sense of Lemma \ref{lem:convexity-unique}. Consequently, the joint objective
\begin{equation}
\begin{aligned}
&\mathcal{V}(\lambda_1,\ldots,\lambda_C)\\
&:=\sum_{c=1}^C \Big\{\lambda_c\rho+\lambda_c\varepsilon\log
\mathbb{E}_{\mathbf{y}\sim Q^{\nu_c}_{\varepsilon,\mathbf{x}}}\exp\big(f_c(\mathbf{y};\theta)/(\lambda_c\varepsilon)\big)\Big\}
\end{aligned}
\end{equation}
is convex on $(0,\infty)^C$. Moreover, unless $f_c(\cdot;\theta)$ is $\nu_c$-a.s.\ constant for every $c$, $\mathcal{V}$ is strictly convex in each non-degenerate coordinate and the corresponding minimizer components $\lambda_c^\star$ are unique; if $f_c(\cdot;\theta)$ is a.s.\ constant for some $c$, then the unique minimizer in that coordinate is $\lambda_c^\star=0$.
\end{corollary}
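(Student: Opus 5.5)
The plan is to reduce everything to Lemma~\ref{lem:convexity-unique} applied one class at a time, and then use the separable structure of $\mathcal{V}$. For fixed $\mathbf{x}$ and $\theta$, define the scalar functions
\begin{equation*}
g_c(\lambda_c):=\lambda_c\rho+\lambda_c\varepsilon\log\mathbb{E}_{\mathbf{y}\sim Q^{\nu_c}_{\varepsilon,\mathbf{x}}}\exp\big(f_c(\mathbf{y};\theta)/(\lambda_c\varepsilon)\big),
\end{equation*}
so that $\mathcal{V}(\lambda_1,\ldots,\lambda_C)=\sum_{c}g_c(\lambda_c)$.

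Each $g_c$ is exactly the dual $V_D$ of the preliminary section, with $\nu$ replaced by the Gaussian mixture $\nu_c$ and $f$ by $f_c(\cdot;\theta)$. Lemma~\ref{lem:convexity-unique} is stated for an arbitrary reference measure and loss. So we only need to check that its standing hypotheses hold for $\nu_c$: the normalizer $Z(\mathbf{x})=\int e^{-c(\mathbf{x},\mathbf{y})/\varepsilon}\,\mathrm{d}\nu_c(\mathbf{y})$ must be finite and positive, and the moment generating function of $f_c$ under $Q^{\nu_c}_{\varepsilon,\mathbf{x}}$ must be finite for $\lambda_c>0$.

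Finiteness of $Z(\mathbf{x})$ holds because $c\ge 0$ and $\nu_c$ is a probability measure. Positivity holds because the Gaussian components have full support, assuming the $\Sigma_b$ are nonsingular or regularized. Finiteness of the moment generating function is the one point needing an assumption, for example bounded or sub-quadratic growth of $f_c$ relative to the Gaussian tails. I expect this integrability check to be the main, though mild, obstacle; I would state it as the same assumption under which Lemma~\ref{lem:convexity-unique} is cited from \cite{wang2025sinkhorn}.

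With the hypotheses in place, Lemma~\ref{lem:convexity-unique} gives the first claim. Each $g_c$ is convex on $[0,\infty)$, and it is strictly convex on $(0,\infty)$ unless $f_c$ is $\nu_c$-a.s.\ constant. Equivalently, the second derivative $g_c''(\lambda)=\lambda^{-3}\varepsilon^{-1}\mathrm{Var}_{\tilde Q_\lambda}(f_c)$ under the exponentially tilted measure is nonnegative, and it is positive exactly in the non-degenerate case.

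For the joint statement, I will use that a sum of functions of disjoint coordinates is convex whenever each summand is convex. Concretely, the Hessian of $\mathcal{V}$ is $\mathrm{diag}(g_1'',\ldots,g_C'')\succeq 0$. Alternatively, the convexity inequality can be added coordinatewise along any segment in $(0,\infty)^C$. Separability also gives
\begin{equation*}
\min_{\lambda\in[0,\infty)^C}\mathcal{V}=\sum_c\min_{\lambda_c\ge 0}g_c,
\end{equation*}
so a point minimizes $\mathcal{V}$ if and only if each of its components minimizes the corresponding $g_c$.

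Uniqueness then follows coordinatewise. Coercivity comes from $g_c\ge\lambda_c\rho$: by Jensen's inequality, $\log\mathbb{E}\exp(\cdot)\ge \mathbb{E}f_c/(\lambda\varepsilon)$, so the log term is bounded below by $\mathbb{E}f_c$, which suffices for $f_c\ge 0$ and otherwise only shifts the bound by a constant. Combined with strict convexity, coercivity makes $\lambda_c^\star$ unique in each non-degenerate coordinate.

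In a degenerate coordinate, $f_c\equiv a_c$ almost surely, and $g_c(\lambda)=\lambda\rho+a_c$ for $\lambda>0$. This is strictly increasing because $\rho>0$, and it extends continuously to $\lambda=0$, so the unique minimizer is $\lambda_c^\star=0$, exactly as in the lemma.

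Finally, I would remark that strictness of $\mathcal{V}$ holds only "in each non-degenerate coordinate": the diagonal Hessian has zero entries in degenerate coordinates, so $\mathcal{V}$ is jointly strictly convex precisely when no class is degenerate. The "unless for every $c$" phrasing in the corollary should be read in this coordinatewise sense.
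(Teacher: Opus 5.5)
Your proposal is correct and follows the same route the paper takes. The paper gives no separate proof of the corollary; it reads it off from Lemma~\ref{lem:convexity-unique} applied to each class (with $\mu=\delta_{\mathbf{x}}$ and reference $\nu_c$) together with the fact that $\mathcal{V}$ is a separable sum of one-dimensional functions. Your coordinatewise reading of the ``unless for every $c$'' clause is the right one. One small simplification: $Z(\mathbf{x})>0$ holds for any probability measure $\nu_c$, because $e^{-c(\mathbf{x},\cdot)/\varepsilon}$ is strictly positive wherever $c$ is finite, so you do not need the full-support or nonsingular-covariance assumption.
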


\paragraph{Decision rule and training objective.}
With class-specific robust logits in hand, our prediction is $\hat{c}=\arg\max_c V_c(\mathbf{x};\theta)$. Training minimizes the softmax cross-entropy applied to these robust logits:
\begin{equation}
\label{ce_loss}
\mathcal{L}_{\mathrm{CE}}(\mathbf{x},c^\star;\theta)
= -\log \frac{\exp\!\big(V_{c^\star}(\mathbf{x};\theta)\big)}{\sum_{c=1}^{C}\exp\!\big(V_{c}(\mathbf{x};\theta)\big)}.
\end{equation}
Note that this coincides with the standard cross-entropy when $V_c$ are the raw logits from the classifier head. Our goal is to make the decision reliable under shift. Since each $V_c(\mathbf{x};\theta)$ in \eqref{eq:class_dual} is the worst-case class score within a Sinkhorn ball shaped by the class-adaptive prior $\nu_c$, the likelihood $V_c(\mathbf{x};\theta)$ is guarded against per-class perturbations. Leveraging few-shot supports to form $\nu_c$ aligns the ambiguity set with transferable base-class structure, keeping novel-class predictions reliable under domain noise and scarcity. Besides, PG-DRO increases it increases the robust margin $V_{c^\star}(\mathbf{x};\theta)-\max_{c\ne c^\star}V_c(\mathbf{x};\theta)$ and thereby lowers worst-case misclassification.

To study how adaptive OT priors affect the robust objective, we track the Sinkhorn DRO value $V_\nu(\theta)$ as the class prior $\nu$ is updated. Here  for brevity, we fix a target class $c$ and omit the subscript $c$. Denote $w^{(t)}$ as the mixture weights over base prototypes in iteration $t$ and $\nu^{(t)}=\sum_b w^{(t)}_{b}\mu_b$, the update is given by a relaxed fixed-point iteration with the stochastic OT map $\widehat T$. Based on some wild and common assumptions, we could obtain the following contraction property:
\begin{theorem}[Contraction of $V$ under adaptive OT]\label{thm:contraction}
Let $w^{(t+1)}=(1-\eta_t)w^{(t)}+\eta_t\widehat T\big(w^{(t)}\big)$ with $\eta_t\in(0,1)$,
and define $\nu^{(t)}=\sum_b w^{(t)}_{b}\mu_b$ and $\Delta_t(\theta):=|V_{\nu^{(t)}}(\theta)-V_{\nu^\star}(\theta)|$.
Suppose the OT map $T$ is locally contractive near $w^\star$ with contraction rate $\kappa\in(0,1)$. Then, for all $t$ in the contraction neighborhood,
\begin{equation}
    \Delta_{t+1}(\theta)\ \le\ (1-\eta_t\kappa)\Delta_t(\theta)\ +\ \mathcal O\!\big(N^{-1/2}\big),
\end{equation}
where $\kappa$ is the contraction constant of the population map $T$ around $w^\star$, and $N$ is the number of few-shot samples.
\end{theorem}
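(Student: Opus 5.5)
The plan is to split the argument into two independent pieces: a contraction statement for the weights $w^{(t)}$, and a Lipschitz estimate transferring that contraction to the value $V_\nu(\theta)$. Two standing assumptions are needed: the local contractivity of the population map $T$ from the statement, and a concentration bound
\[
\mathbb{E}\,\|\widehat T(w)-T(w)\|_1=\mathcal O(N^{-1/2})
\]
uniformly over the contraction neighborhood. The latter should follow because $\widehat T$ is built from entropic OT on the $N$ support samples, and entropic OT plans concentrate at the parametric rate.

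\textbf{Weight recursion.} Using the fixed-point identity $w^\star=T(w^\star)$,
\[
w^{(t+1)}-w^\star=(1-\eta_t)(w^{(t)}-w^\star)+\eta_t\big(T(w^{(t)})-T(w^\star)\big)+\eta_t\big(\widehat T(w^{(t)})-T(w^{(t)})\big).
\]
Interpreting ``contraction rate $\kappa$'' as $\|T(w)-T(w^\star)\|\le(1-\kappa)\|w-w^\star\|$ near $w^\star$ gives
\[
\|w^{(t+1)}-w^\star\|\le(1-\eta_t\kappa)\|w^{(t)}-w^\star\|+\eta_t\,\mathcal O(N^{-1/2}).
\]
Since $\eta_t<1$, the noise term is $\mathcal O(N^{-1/2})$.

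\textbf{Transfer to $V$.} I would show that $\nu\mapsto V_\nu(\theta)$ is Lipschitz in the mixture weights, i.e. $|V_{\nu}-V_{\nu'}|\le L\|w-w'\|_1$ for weights in the neighborhood. The argument has two parts.

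\emph{Mixture linearity.} For a mixture prior, $Q^{\nu}_{\varepsilon,\mathbf x}$ has the unnormalized density
\[
\sum_b w_b\, e^{-c(\mathbf x,\cdot)/\varepsilon}\,\mathrm d\mu_b .
\]
Both $Z(\mathbf x)$ and $\mathbb{E}_{Q}\exp(f/(\lambda\varepsilon))$ are ratios of quantities linear in $w$. With $f$ bounded and $c\ge0$ bounded on the relevant domain, $Z$ is bounded below. The map $w\mapsto\log\mathbb E_Q\exp(f/(\lambda\varepsilon))$ is then Lipschitz with constant of order $\|f\|_\infty/(\lambda\varepsilon)$ times $\sup_b$ ratios.

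\emph{Envelope step.} For $\lambda$ in a compact interval $[\underline\lambda,\bar\lambda]\subset(0,\infty)$, the difference of minima is bounded by the supremum over that interval of the difference of objectives: $|\min g-\min h|\le\sup|g-h|$. By Lemma~\ref{lem:convexity-unique} and the coercivity remark, the minimizer $\lambda^\star(\nu)$ is unique and positive in the non-degenerate case, and it stays in such an interval for $\nu$ near $\nu^\star$ by continuity of strictly convex minimizers. Corollary~\ref{cor:multiclass-convexity} lets this apply class by class.

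\textbf{Closing the recursion.} This step is the main obstacle, because Lipschitzness alone yields $\Delta_{t+1}\le L\|w^{(t+1)}-w^\star\|$, not a recursion in $\Delta_t$ itself. To get the stated form I would use a two-sided comparison $\Delta_t\asymp\|w^{(t)}-w^\star\|$. The cleanest route is to state the result for the potential $D_t:=L\|w^{(t)}-w^\star\|\ge\Delta_t$, which satisfies the recursion exactly. Alternatively, within the neighborhood I would linearize $V$ along the direction of $w^{(t)}-w^\star$, whose gradient is nonzero in the non-degenerate case; this absorbs the second-order remainder into the contraction factor at the cost of slightly shrinking $\kappa$. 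Either way, the $\eta_t\,\mathcal O(N^{-1/2})$ noise, multiplied by $L$, gives the additive $\mathcal O(N^{-1/2})$ term.
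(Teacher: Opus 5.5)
Your ``main obstacle'' is a real gap, and only your first fix works. The linearization route needs a uniform lower bound $\Delta_t\ge c\,\|w^{(t)}-w^\star\|_1$. Set $\nu_w:=\sum_b w_b\mu_b$ and $\phi(w):=V_{\nu_w}(\theta)$. The lower bound then requires the directional derivative of $\phi$ along $w^{(t)}-w^\star$ to be bounded away from zero over all directions, and a nonzero gradient does not give this. On the $(B-1)$-dimensional tangent space $\{d:\sum_b d_b=0\}$ of the simplex, the gradient of the scalar $\phi$ has a kernel of dimension at least $B-2$. So for $B\ge3$, $\phi-\phi(w^\star)$ vanishes to first order along some direction. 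When the tangential gradient is nonzero, the level set $\{\phi=\phi(w^\star)\}$ contains interior points $w\ne w^\star$ arbitrarily close to $w^\star$. If $w^{(t)}$ is such a point, then $\Delta_t=0$. The step $(1-\eta_t)w+\eta_tT(w)$ generically leaves that level set, so $\Delta_{t+1}$ is of order $\|w-w^\star\|_1$ independently of $N$, and the stated bound fails for large $N$. Linearization works only when $B=2$ and $\phi'(w^\star)\ne0$, and even then gives a slightly smaller rate $\kappa'<\kappa$. What is provable is the dominated form: $\Delta_t\le D_t$ with $D_{t+1}\le(1-\eta_t\kappa)D_t+\mathcal O(N^{-1/2})$. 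This is also exactly what the paper's proof delivers. It runs the recursion for $E_t=W_1(\nu^{(t)},\nu^\star)$, uses its stability lemma to get $\Delta_t(\theta)\le CE_t$, and then only asserts that $\Delta_t$ ``satisfies the same contraction conclusions''.

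Apart from this, your skeleton matches the paper's: contraction of the prior with an $\mathcal O(N^{-1/2})$ sampling term, then Lipschitz transfer to $V$. Your local choices are sounder in three places.
\begin{itemize}
\item You run the recursion on $\|w^{(t)}-w^\star\|_1$, the norm in which the contraction of $T$ holds. The paper runs it in $W_1$, which needs $W_1(\nu_{T(w)},\nu^\star)\le(1-\kappa)W_1(\nu_w,\nu^\star)$. That does not follow from an $\ell_1$ contraction plus the one-sided bound $W_1\le L_{\mathrm{mix}}\|w-w'\|_1$.
\item Your envelope inequality $|\min_\Lambda g-\min_\Lambda h|\le\sup_\Lambda|g-h|$, over a compact $\Lambda\subset(0,\infty)$ containing both minimizers, is the correct version of the paper's step. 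The paper compares the two dual objectives at an arbitrary fixed $\lambda_0$, but two upper bounds at the same $\lambda_0$ do not control the difference of the minima. Keeping $\Lambda$ away from $0$ is precisely what controls the $e^{2B_f/(\lambda\varepsilon)}$ factor.
\item Your mixture-linearity estimate avoids the Kantorovich--Rubinstein argument and any Lipschitz regularity of $f_\theta$ or $k_x$ in $y$. That suffices here because every $\nu^{(t)}$ lives on the same dictionary.
\end{itemize}
The one place the paper is more explicit is the sampling term. It derives $\|\widehat T(w)-T(w)\|_1\le c_T\|\hat b-b\|_1$ from an implicit-function sensitivity bound in the target marginal, and then assumes $\mathbb E\|\hat b-b\|_1=\mathcal O(N^{-1/2})$. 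That is a cleaner justification than appealing to concentration of entropic OT plans.
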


The proof is provided in Appendix \ref{supp:exp}. Theorem~\ref{thm:contraction} shows that the adaptive prior update drives the robust value $V_{\nu^{(t)}}(\theta)$ toward the oracle $V_{\nu^\star}(\theta)$ at a linear rate $1-\eta_t\kappa$, modulo a stochastic term $\mathcal O(N^{-1/2})$ arising from few-shot estimation. So whenever the number of prototypes $N\to\infty$ and $\sum_t \eta_t=\infty$, the gap $\Delta_t(\theta)$ vanishes. In particular, the adaptive prior family concentrates around $\nu^\star$ faster than any fixed-reference alternative, which under the Lipschitz continuity of $V_\nu(\theta)$ in $W_1(\nu,\cdot)$ directly supports the worst-case excess-risk comparison.

\begin{theorem}[Consistency]
Assume the ground cost $c(x,y)$ is continuous and bounded below, $f_\theta(\mathbf{x},\mathbf{y})$ is continuous in $\mathbf{x}$, $\mathbf{y}$ and locally Lipschitz in $\theta$, the output space $\mathcal Y$ is compact, then for each $\mathbf{x}$ and class $c$,
\begin{equation}
V_c^{(N)}(\mathbf{x};\theta)\to V_c^\star(\mathbf{x};\theta)~~\text{and}~~
\lambda_c^{(N)}(\mathbf{x})\to\lambda_c^\star(\mathbf{x}),
\end{equation}
where $V_c^{(N)}$ (resp.\ $V_c^\star$) is the Sinkhorn DRO dual with kernel induced by $\nu_c^{(N)}$ (resp.\ $\nu_c^\star$), and $\lambda_c^{(N)}$ (resp.\ $\lambda_c^\star$) is any minimizer of the corresponding dual variable.
\end{theorem}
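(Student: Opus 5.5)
The plan is to treat the statement as a stability result for a parametric family of one-dimensional convex programs, with parameter $\nu_c$. It has three steps: weak convergence of the priors, pointwise and then locally uniform convergence of the dual objective in $\lambda$, and convergence of minimizers via strict convexity (Lemma~\ref{lem:convexity-unique} and Corollary~\ref{cor:multiclass-convexity}).

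\textbf{Step 1: convergence of the priors.} I take as the standing hypothesis that $\nu_c^{(N)}\Rightarrow\nu_c^\star$ weakly. This should follow from the OT weights $\tilde w^{(N)}_{bc}\to \tilde w^\star_{bc}$ (the empirical hierarchical OT plan converges, in the spirit of Theorem~\ref{thm:contraction}, with the $\mathcal O(N^{-1/2})$ error vanishing) together with the base moments $\mu_b,\Sigma_b$ being fixed or converging. A finite Gaussian mixture is continuous in its weights and moments in the weak topology. Since $\mathcal Y$ is compact, I regard the priors as restricted to $\mathcal Y$, so that weak convergence is tested against bounded continuous functions on $\mathcal Y$.

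\textbf{Step 2: convergence of the dual objective.} Write
\[
\Phi_N(\lambda)=\lambda\rho+\lambda\varepsilon\log\frac{\int e^{-c(\mathbf{x},\mathbf{y})/\varepsilon+f_c(\mathbf{y};\theta)/(\lambda\varepsilon)}\,d\nu_c^{(N)}(\mathbf{y})}{\int e^{-c(\mathbf{x},\mathbf{y})/\varepsilon}\,d\nu_c^{(N)}(\mathbf{y})},
\]
and let $\Phi^\star$ be the same expression with $\nu_c^\star$. By continuity of $c$ and $f$ on the compact $\mathcal Y$, and since $c$ is bounded below, both integrands are bounded and continuous for fixed $\lambda>0$. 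Hence the numerator and the normalizer $Z$ converge. The limiting $Z$ is strictly positive because the integrand is strictly positive, so $\Phi_N(\lambda)\to\Phi^\star(\lambda)$ pointwise on $(0,\infty)$. The $\Phi_N$ are convex (Lemma~\ref{lem:convexity-unique}), and pointwise convergence of convex functions on an open interval is automatically uniform on compact subintervals. This conclusion also holds with $\theta$ ranging over compacts, by the local Lipschitz property in $\theta$. Because the minimum over $\lambda$ is not restricted to a compact interval, I will still need a separate argument to control $\lambda$ near $0$ and near $\infty$.

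\textbf{Step 3: convergence of values and minimizers (the main obstacle).} The main obstacle is keeping the minimizers in a compact set uniformly in $N$; I handle the two ends separately.

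\emph{Upper end.} Let $\|f\|_\infty$ be the sup of $f_c(\cdot;\theta)$ over $\mathcal Y$. Then $\Phi_N(\lambda)\ge\lambda\rho-\|f\|_\infty$ and $\Phi_N(\lambda)\le\lambda\rho+\|f\|_\infty$. Any minimizer therefore satisfies $\lambda_c^{(N)}\le 2\|f\|_\infty/\rho$, a bound independent of $N$.

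\emph{Lower end.} As $\lambda\downarrow0$, $\Phi_N(\lambda)\to\operatorname{ess\,sup}_{Q^{(N)}}f$. I extend $\Phi_N$ continuously to $\lambda=0$ by this value. The essential sup may jump under weak convergence, so I replace it with a bound that only uses convexity. The functions $\Phi_N$ are convex and equi-bounded on $[0,2\|f\|_\infty/\rho]$, and they converge uniformly on every compact subinterval $[\delta,L]$ with $\delta>0$. In the nondegenerate case $\lambda_c^\star>0$ (Corollary~\ref{cor:multiclass-convexity}), pick $\delta<\lambda_c^\star$. Strict convexity gives $\Phi^\star(\delta)>\Phi^\star(\lambda_c^\star)$, and for large $N$ uniform convergence on $[\delta,L]$ gives $\Phi_N(\delta)>\Phi_N(\lambda_c^\star)$. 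By convexity $\Phi_N$ is then nondecreasing on $[0,\delta]$ relative to this gap, so no minimizer of $\Phi_N$ lies in $[0,\delta]$.

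\emph{Conclusion.} With all minimizers confined to the compact $[\delta,L]$, the standard argmin theorem applies: uniform convergence on $[\delta,L]$ plus a unique minimizer of the limit give $V_c^{(N)}=\min\Phi_N\to\min\Phi^\star=V_c^\star$ and $\lambda_c^{(N)}\to\lambda_c^\star$ for any choice of minimizers.

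\emph{Degenerate case.} If $\lambda_c^\star=0$, then $f_c$ is $\nu_c^\star$-a.s.\ constant. Convergence of values follows from the two-sided bounds together with $\Phi_N(\lambda)\to\Phi^\star(\lambda)=\lambda\rho+\mathrm{const}$, and $\lambda_c^{(N)}\to0$ follows because $\Phi^\star$ is strictly increasing.
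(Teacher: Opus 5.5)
Your proposal follows the paper's route: weak convergence of the Gibbs kernels, locally uniform convergence of the one-dimensional dual on $(0,\infty)$, confinement of the minimizers, and argmin stability from uniqueness. In the case $\lambda_c^\star>0$ it is complete and more careful than the paper. The paper simply asserts an $N$-independent compact $\Lambda\subset(0,\infty)$ containing all minimizers, which cannot hold when $\lambda_c^\star=0$. You instead prove $\lambda\le 2\|f\|_\infty/\rho$ and build the convexity barrier at $\delta$. Getting locally uniform convergence from convexity rather than from equicontinuity in $\lambda$ is equally valid.

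The degenerate case has a hole. Pointwise convergence on $(0,\infty)$ gives $\limsup_N V_c^{(N)}\le V_c^\star$ but not the reverse inequality. The minimizers $\lambda_c^{(N)}$ drift to $0$, where pointwise convergence says nothing, and the bound $\lambda\rho-\|f\|_\infty$ only yields $V_c^{(N)}\ge-\|f\|_\infty$. If $f_c\equiv a$ $\nu_c^\star$-a.s., Jensen repairs this, since $\Phi_N(\lambda)\ge\lambda\rho+\mathbb{E}_{Q^{(N)}}f_c\to a$. A repair that does not need constancy uses convexity again. For $0<\lambda_1<\lambda_2$, $\Phi_N$ lies above its secant through $\lambda_1,\lambda_2$ on all of $[0,\lambda_1]$. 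These secants converge to the corresponding secant of $\Phi^\star$. The minimum of that limiting secant over $[0,\lambda_1]$ tends to $\Phi^\star(0)=V_c^\star$ as $\lambda_1\downarrow0$.

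You need the general version because the implication ``$\lambda_c^\star=0\Rightarrow f_c$ is $\nu_c^\star$-a.s.\ constant'', which you draw from Lemma~\ref{lem:convexity-unique}, is not true in general. Let $M=\mathrm{ess\,sup}\,f_c$ and let $p$ be the $Q^{\nu_c^\star}_{\varepsilon,\mathbf{x}}$-mass of $\{f_c=M\}$. Then $\Phi^\star(\lambda)\ge M+\lambda(\rho+\varepsilon\log p)$, so $\lambda_c^\star=0$ whenever $p\ge e^{-\rho/\varepsilon}$, e.g.\ for a continuous $f_c$ with a flat top. Split instead on $\lambda_c^\star>0$ versus $\lambda_c^\star=0$. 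In the second case $0$ is still the unique minimizer: by strict convexity if $f_c$ is nonconstant, and by $\rho>0$ if it is constant. Your argument for $\lambda_c^{(N)}\to0$ then goes through unchanged, and the secant bound gives convergence of the values. The paper's proof has the same blind spot, since it relies on the same lemma.
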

This establishes robust risk consistency: solutions trained with finitely many supports converge to those of the population-level prototype-guided objective, thereby guarantee the validity of robust decisions and heir potential for generalization under distributional shift.

In summary, by equipping each class with an adaptive prior $\nu_c$ via adaptive OT, PG-DRO embeds class-specific reference distributions into Sinkhorn DRO. This unifies hierarchical priors with distributionally robust optimization, yielding robustness that adapts to the variability of novel classes, improves generalization under limited supervision, and strengthens the reliability of decision making in few-shot scenarios.

\section{Experiments}

We aim to answer the following Research Questions (RQs) through our experiments:
\begin{itemize}
    \item \textbf{RQ1:} Does PG-DRO improve generalization under distribution shifts compared to standard methods?
    \item \textbf{RQ2:} Can PG-DRO provide stronger robustness and decision quality in worst-case scenarios, particularly for weakly-supervised and severely shifted classes?
    \item \textbf{RQ3:} Does the adaptive OT prior in PG-DRO effectively generalize the cross-class knowledge to enhance the robustness of few-shot learners under limited supervision?
\end{itemize}

We next investigate these questions on both synthetic datasets and real benchmark datasets.

\subsection{Simulation}

To answer \textbf{RQ1}, we construct synthetic datasets that simulate distributional shifts between a source domain and a target domain. In the source domain, each of the $C$ latent classes is modeled as a Gaussian
\begin{equation*}
\mathbf{x} \sim \mathcal{N}(\mu_c, \Sigma_c),  \quad c \in \{1,\dots,C\},
\end{equation*}
yielding supervised pairs $(\mathbf{x},\mathbf{z})$.

The target domain contains the same $C$ classes, but their distributions are perturbed versions of the source ones. 
Specifically, we generate target Gaussians by applying
\begin{equation*}
\mu_c^\prime = \mu_c + \lambda_{\text{mean}}\Delta\mu_c, 
\qquad \Sigma_c^\prime = \mathbf{R}\big(\lambda_{\text{cov}}\Sigma_c\big)\mathbf{R}^\top,
\end{equation*}
where $\Delta\mu_c$ denotes a mean displacement vector, $\mathbf{R}$ a random rotation matrix, and both are controlled by the hyperparameters $\lambda_{\text{mean}}$ and $\lambda_{\text{cov}}$. 
To emulate the few-shot adaptation scenario, we retain only a limited number of labeled support samples per class in the target domain.

\begin{figure}[t]
  \centering
  \subfigure[Source domain per class $\mathcal{N}(\mu_c,\Sigma_c)$ \label{fig:src}]{
    \includegraphics[width=.475\linewidth]{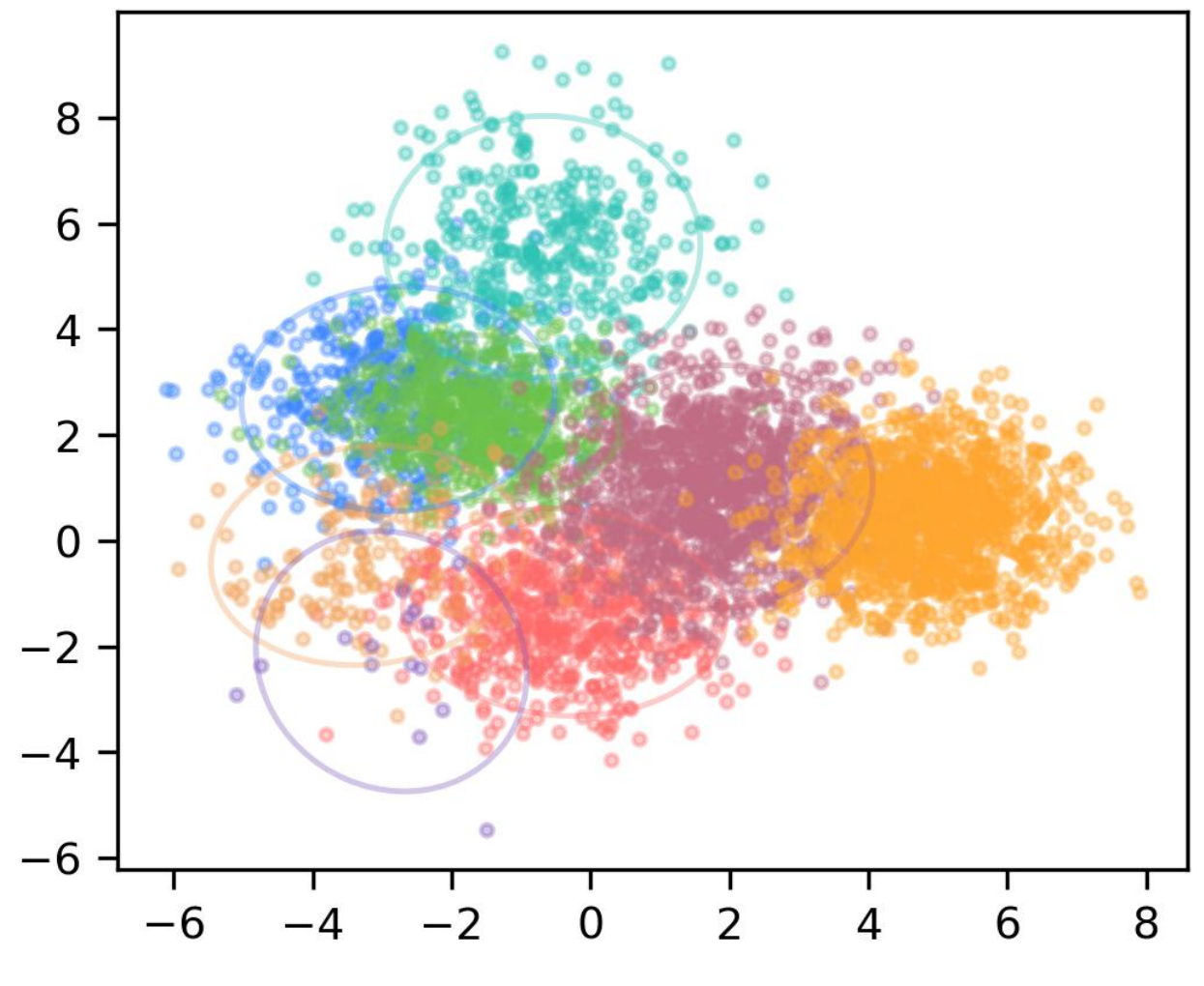}}
  \hfill
  \subfigure[Target domain with $K$-shot supports\label{fig:tgt}]{
    \includegraphics[width=.475\linewidth]{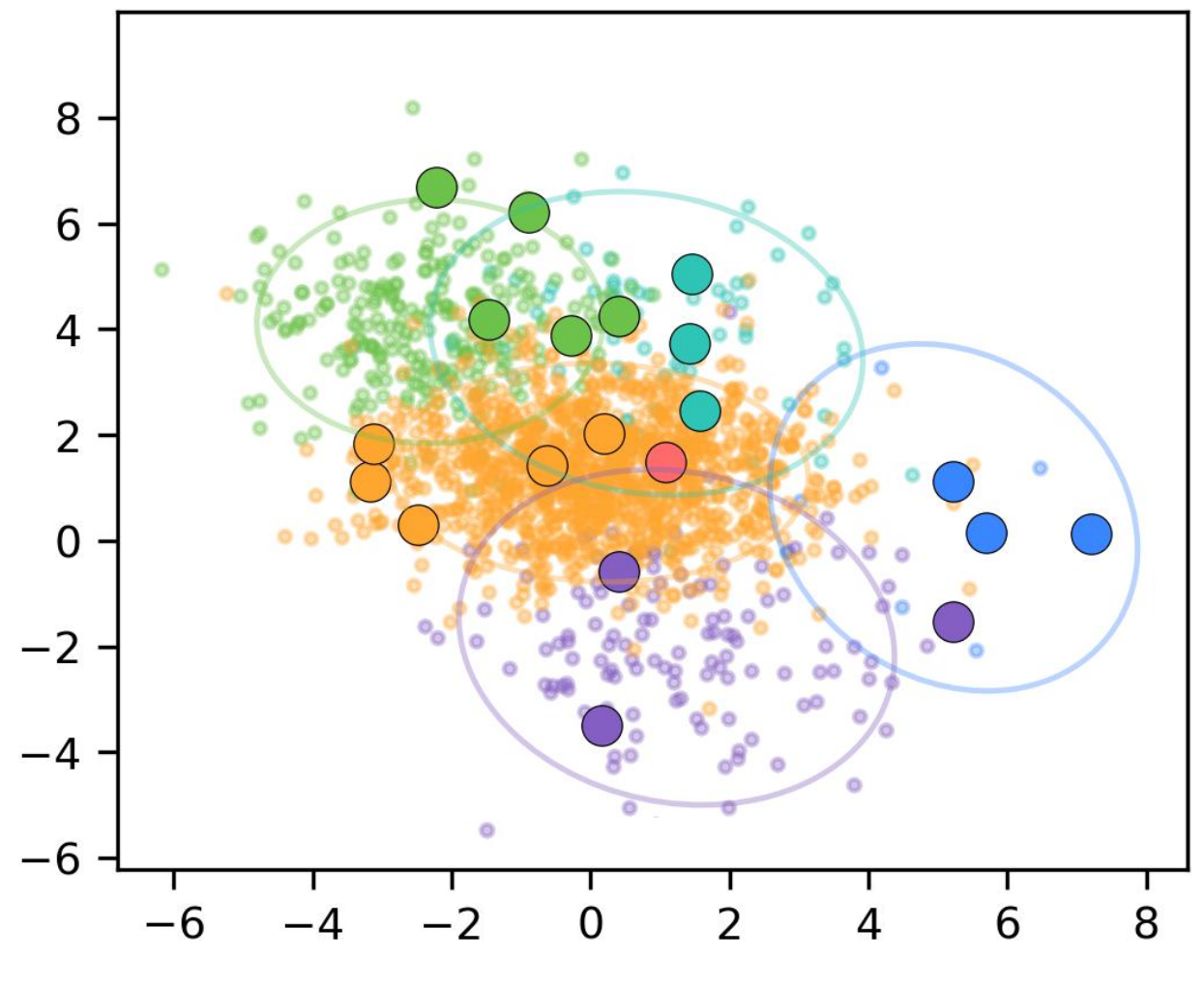}}
  \caption{Comparison of source and target domain with 2D projection}
  \label{fig:geom-paired}
\end{figure}

\begin{figure*}
    \centering
    \includegraphics[width=1\linewidth]{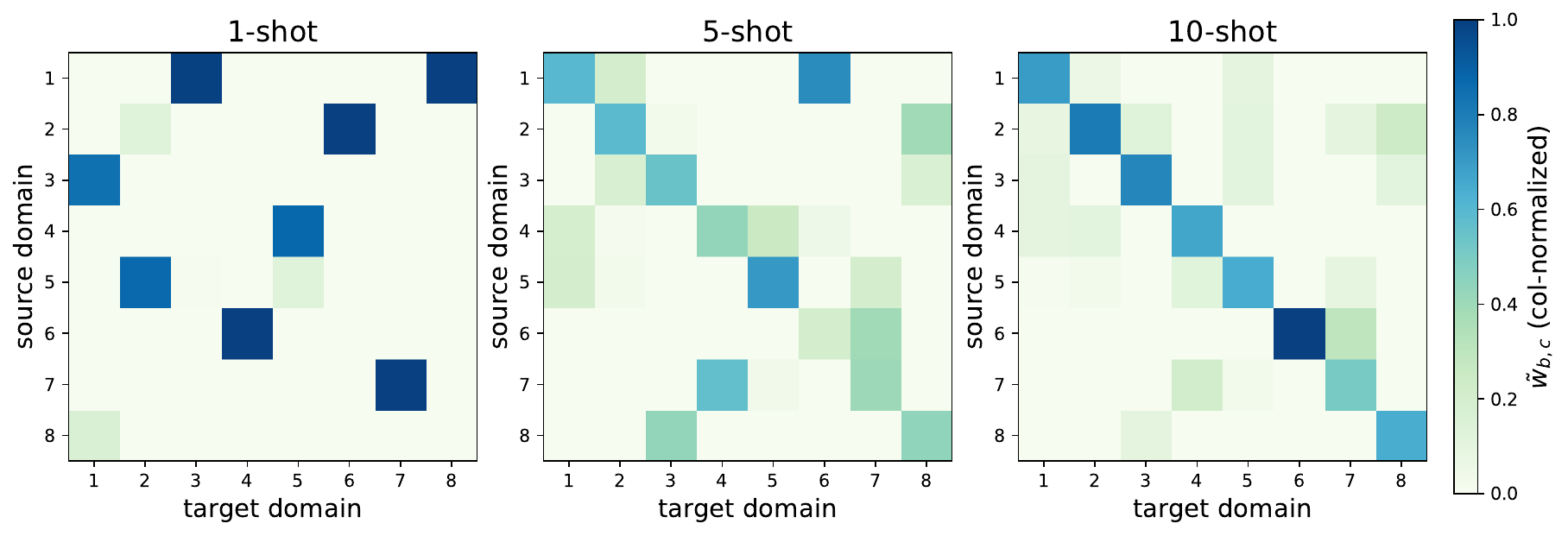}
    \caption{Normalized weights $\tilde{w}_{bc}$ for adaptive OT from source to target domain}
    \label{fig_heatmap}
\end{figure*}
Support $P_{\mathrm{s}}$ and $P_{\mathrm{t}}$ denote source and target domain respectively, we compare PG-DRO against two baselines.
\begin{itemize}
\item \textbf{Empirical Risk Minimization (ERM)}.
    ERM trains predictors only on labeled source-domain samples:
    \begin{equation}
    \label{loss_erm}
    \mathcal{L}_{\mathrm{ERM}} 
    = \mathbb{E}_{(\mathbf{x}_{\mathrm{s}},\mathbf{z}_{\mathrm{s}})\sim P_{\mathrm{s}}}
      \big[\ell\big(\mathbf{z}_{\mathrm{s}}, f_\theta(\mathbf{x}_{\mathrm{s}})\big)\big],
    \end{equation}
    where $\ell$ evaluates decision quality, instantiated as cross-entropy for classification and squared error for regression.

\item \textbf{Classical OT adaptation}.
    OT learns a transport map $\mathcal{T}$ that maps source distributions to target ones, minimizing
    \begin{equation}
    \label{loss_ot}
    \mathcal{L}_{\mathrm{OT}} 
    = \mathbb{E}_{(\mathbf{x}_{\mathrm{s}},\mathbf{z}_{\mathrm{s}})\sim P_{\mathrm{s}}, (\mathbf{x}_{\mathrm{t}},\mathbf{z}_{\mathrm{t}})\sim P_{\mathrm{t}}}
       \big[\ell\big(\mathbf{z}_{\mathrm{t}}, f_\theta(\mathcal{T}(\mathbf{x}_{\mathrm{s}}))\big)\big],
    \end{equation}
    where $(\mathbf{x}_{\mathrm{s}},\mathbf{z}_{\mathrm{s}})$ with $\mathbf{z}_{\mathrm{s}}=\mathbf{z}_{\mathrm{t}}=c$ and $(\mathbf{x}_{\mathrm{t}},\mathbf{z}_{\mathrm{t}})$ are paired from the same class in source and target.
\end{itemize}

To answer $\bf RQ2$, we assess model performance using two complementary metrics. Average accuracy over all target-domain test samples reflects overall generalization under distribution shift. We also report the worst-$10\%$ accuracy, defined as the accuracy restricted to the lowest-performing $10\%$ of classes. This criterion highlights whether a method can sustain reliable predictions even for severely shifted or minority classes. For $\bf RQ3$, we draw a heatmap to show the concentration degree to see if the adaptive OT could capture the mapping through the learned transport weights.

\paragraph{Few-shot simulation}
To emulate realistic prior shift, we sample domain-specific class proportions from Dirichlet distributions. The source domain uses a nearly uniform prior drawn from $\mathrm{Dir}(1)$, while the target domain adopts a long-tailed prior drawn from $\mathrm{Dir}(\alpha_{\mathrm{test}})$ with $\alpha_{\mathrm{test}}\ll1$. Because such priors naturally yield highly imbalanced class frequencies, the resulting support sets contain very few prototypes for minority classes, producing a challenging head–tail few-shot scenario.

Based on this framework, we evaluate PG-DRO against other baselines on both classification and regression tasks to assess its decision-making ability. It is worth noting that all evaluations are performed on the target domain test set, ensuring that our reported metrics directly reflect generalization and robustness under domain shift. And all results are presented as mean $\pm$ standard deviation over repeated runs. All experiments are conducted on a machine equipped with 13th Gen Intel(R) Core(TM) i9-13900HX CPU (24 cores) and NVIDIA A100 GPU (40GB). The code is available at

\begin{table}[t]
\centering
\setlength{\tabcolsep}{4pt}
\renewcommand{\arraystretch}{1.05}
\caption{Results under different levels of disturbance}\vspace{2pt}
\resizebox{\columnwidth}{!}{%
\begin{tabular}{c l c c}
\toprule
disturb $\lambda_\text{cov}$ & \makecell[c]{model} 
& \makecell[c]{avg acc.} 
& \makecell[r]{worst 10\% acc.} \\
\cmidrule(lr){1-4}
\multirow{3}{*}{0.0}
& Pure ERM     & \meanpm{0.07}{0.00}  & \meanpm{0.67}{0.02}  \\
& Classical OT & \meanpm{74.06}{1.20} & \meanpm{95.68}{0.15} \\
& PG-DRO       & \bestpm{78.10}{0.06} & \bestpm{98.93}{0.75} \\
\addlinespace
\multirow{3}{*}{1.0}
& Pure ERM     & \meanpm{10.03}{0.29} & \meanpm{29.07}{1.78} \\
& Classical OT & \meanpm{62.47}{0.11} & \meanpm{46.33}{0.63} \\
& PG-DRO       & \bestpm{65.31}{0.39} & \bestpm{48.07}{0.38} \\
\addlinespace
\multirow{3}{*}{2.0}
& Pure ERM     & \meanpm{17.14}{0.30} & \meanpm{30.20}{0.61} \\
& Classical OT & \meanpm{52.42}{0.26} & \meanpm{32.40}{0.38} \\
& PG-DRO       & \bestpm{54.47}{0.64} & \bestpm{32.73}{0.32} \\
\addlinespace
\multirow{3}{*}{3.0}
& Pure ERM     & \meanpm{20.01}{1.73} & \meanpm{25.67}{1.53} \\
& Classical OT & \meanpm{47.15}{0.27} & \meanpm{26.40}{3.27} \\
& PG-DRO       & \bestpm{51.39}{1.84} & \bestpm{32.53}{1.45} \\
\addlinespace
\multirow{3}{*}{4.0}
& Pure ERM     & \meanpm{21.03}{1.38} & \meanpm{22.93}{0.49} \\
& Classical OT & \meanpm{40.39}{0.16} & \meanpm{28.00}{0.21} \\
& PG-DRO       & \bestpm{49.73}{0.21} & \bestpm{39.93}{0.13} \\
\addlinespace
\multirow{3}{*}{5.0}
& Pure ERM     & \meanpm{21.68}{0.26} & \meanpm{23.27}{0.95} \\
& Classical OT & \meanpm{17.43}{0.13} & \meanpm{22.00}{0.21} \\
& PG-DRO       & \bestpm{32.53}{0.04} & \bestpm{26.60}{0.49} \\
\bottomrule
\end{tabular}}
\label{table_sim_cls}
\end{table}
\begin{figure*}[h]
    \centering
    \includegraphics[width=0.97\linewidth]{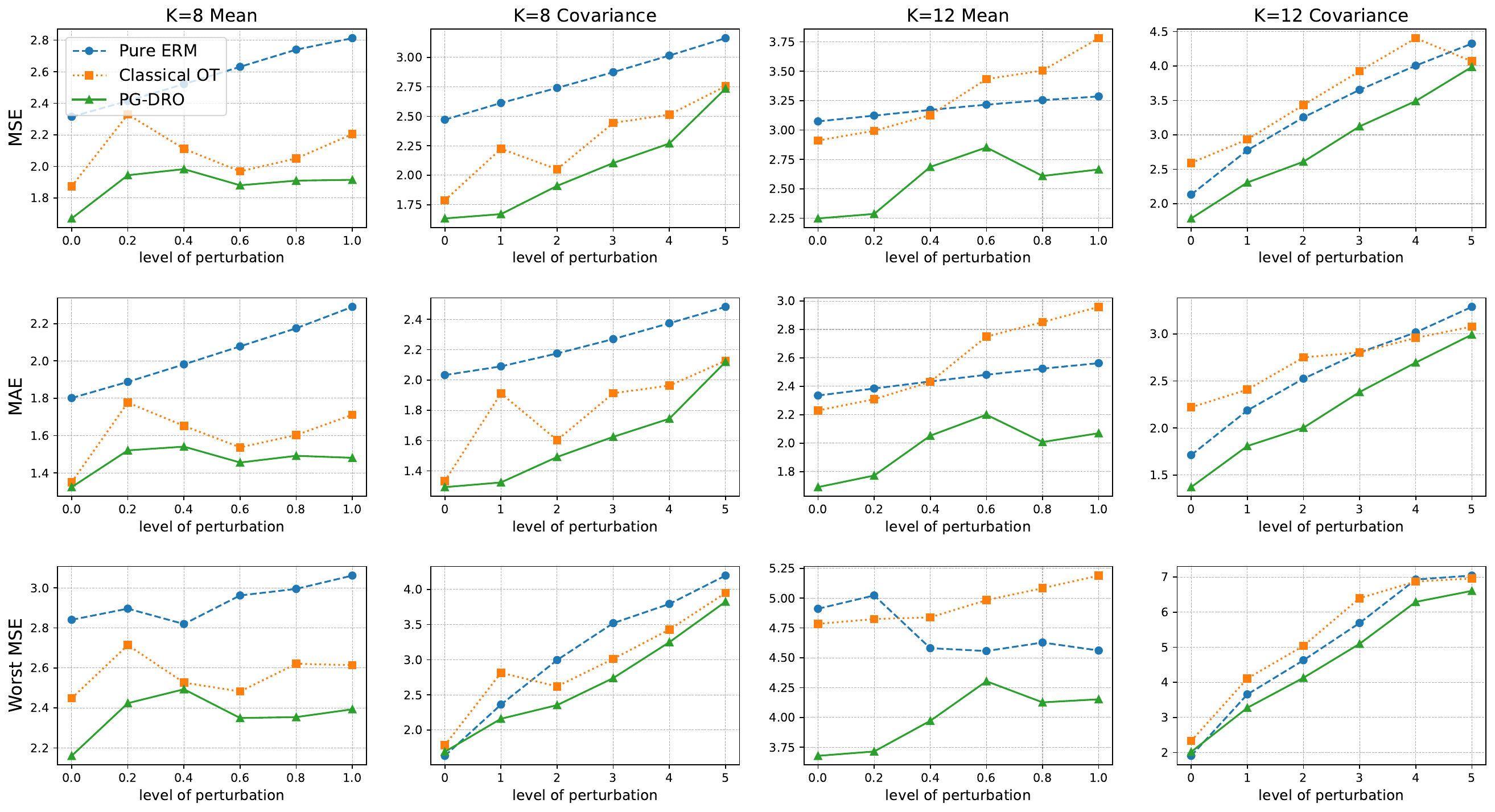}
    \caption{Comparison of robustness under prior shift: PG-DRO consistently improves both average accuracy and worst-case performance over ERM and OT baselines.}
    \label{fig_reg}
\end{figure*}
\subsubsection{Classification}
For classification, each training instance is a pair $(\mathbf{x},\mathbf{z})$, where $\mathbf{x}\in\mathbb{R}^d$ is the feature vector generated from the Gaussian mixture construction, and $\mathbf{z}\in\{1,\dots,C\}$ is its class label. We employ the standard cross-entropy loss \eqref{ce_loss}, with logits obtained from a linear classification head applied to the encoder output. For the ERM and classic OT baselines, logits are computed directly from the encoder–head network. In contrast, for our PG-DRO, logits are given by the robust dual $V_c(\mathbf{x};\theta)$, which integrates Sinkhorn-based class priors $\{\nu_c\}$ and solves $\lambda_c$ via Newton updates. To emulate the few-shot setting in the target domain, we randomly sample between $3$ and $8$ labeled support instances per class which are used both for supervision and for refining the class-specific priors in our method.
As shown in Table~\ref{table_sim_cls}, PG-DRO consistently improves both average accuracy and worst-case performance compared to ERM and classical OT. And the test is carried out on the target domain. This highlights the benefit of integrating adaptive OT priors with Sinkhorn DRO, yielding robustness not only in overall generalization but also in the most challenging shifted classes.

To evaluate whether adaptive OT effectively captures the correct relationship between the source and target domains, we present the results of correlation mapping in Figure \ref{fig_heatmap}. The heatmaps clearly show that as the number of support samples increases, the alignments progressively concentrate along the diagonal. This indicates that with sufficient support data, target classes can be more accurately anchored to their corresponding source classes. The trend becomes increasingly clear, demonstrating sharper and more reliable alignments between source and target domains with more support.
\begin{table*}[h]
\centering
\caption{Accuracy comparison for CIFAR-10 with Laplace Noise}
\resizebox{1\textwidth}{!}{
\begin{tabular}{@{}l *{3}{c} *{3}{c} *{3}{c}@{}}
\toprule
\multicolumn{1}{c}{} &
\multicolumn{3}{c}{Laplace1} &
\multicolumn{3}{c}{Laplace2} &
\multicolumn{3}{c}{Laplace5} \\
\cmidrule(lr){2-4}\cmidrule(lr){5-7}\cmidrule(lr){8-10}
Method & $k{=}1$ & $k{=}5$ & $k{=}10$ & $k{=}1$ & $k{=}5$ & $k{=}10$ & $k{=}1$ & $k{=}5$ & $k{=}10$ \\
\midrule
Few-shot
& \meanpm{10.85}{0.44} & \meanpm{27.25}{0.47} & \meanpm{59.16}{0.50}
& \meanpm{10.03}{0.13} & \meanpm{41.38}{0.41} & \meanpm{52.75}{0.48}
& \meanpm{10.05}{0.20} & \meanpm{25.58}{0.40} & \meanpm{32.24}{0.33} \\
SAA
& \meanpm{18.81}{1.48} & \meanpm{50.33}{0.61} & \meanpm{59.56}{0.40}
& \meanpm{21.91}{0.77} & \meanpm{48.03}{0.91} & \meanpm{58.93}{0.67}
& \meanpm{15.25}{0.42} & \meanpm{39.42}{0.83} & \meanpm{49.68}{0.55} \\
W-DRO
& \meanpm{20.23}{0.18} & \meanpm{49.34}{0.73} & \meanpm{60.41}{0.50}
& \meanpm{22.07}{0.95} & \meanpm{49.35}{0.73} & \meanpm{58.15}{0.66}
& \meanpm{15.96}{0.81} & \meanpm{40.49}{0.69} & \meanpm{50.21}{0.28} \\
PG-DRO
& \bestpm{20.88}{0.57} & \bestpm{50.45}{0.54} & \bestpm{61.54}{0.34}
& \bestpm{22.28}{0.71} & \bestpm{50.21}{0.86} & \bestpm{59.57}{0.45}
& \bestpm{17.71}{0.24} & \bestpm{41.53}{0.39} & \bestpm{50.58}{0.41} \\
\bottomrule
\end{tabular}
}
\label{tab_laplace}
\end{table*}
\begin{table*}[h]
\centering
\caption{Accuracy comparison for CIFAR-10 with Gaussian Noise}
\resizebox{1\textwidth}{!}{
\begin{tabular}{@{}l *{3}{c} *{3}{c} *{3}{c}@{}}
\toprule
\multicolumn{1}{c}{} &
\multicolumn{3}{c}{Gaussian1} &
\multicolumn{3}{c}{Gaussian2} &
\multicolumn{3}{c}{Gaussian5} \\
\cmidrule(lr){2-4}\cmidrule(lr){5-7}\cmidrule(lr){8-10}
Method & $k{=}1$ & $k{=}5$ & $k{=}10$ & $k{=}1$ & $k{=}5$ & $k{=}10$ & $k{=}1$ & $k{=}5$ & $k{=}10$ \\
\midrule
Few-shot
& \meanpm{10.07}{0.15} & \meanpm{26.11}{0.46} & \meanpm{59.38}{0.35}
& \meanpm{9.95}{0.22}  & \meanpm{41.95}{0.33} & \meanpm{52.87}{0.28}
& \meanpm{9.69}{0.34}  & \meanpm{25.48}{0.70} & \meanpm{33.25}{0.46} \\
SAA
& \meanpm{16.37}{0.91} & \meanpm{48.94}{0.67} & \meanpm{60.15}{0.42}
& \meanpm{17.80}{0.75} & \meanpm{48.68}{0.45} & \meanpm{58.37}{0.76}
& \meanpm{17.11}{0.75} & \meanpm{39.48}{0.47} & \meanpm{50.17}{0.70} \\
W-DRO
& \meanpm{19.11}{1.75} & \meanpm{50.70}{0.56} & \meanpm{59.12}{1.25}
& \meanpm{17.48}{1.02} & \meanpm{49.77}{0.53} & \meanpm{58.93}{0.70}
& \meanpm{17.45}{1.11} & \meanpm{39.45}{0.47} & \meanpm{49.73}{0.60} \\
PG-DRO
& \bestpm{20.53}{1.06} & \bestpm{53.62}{1.08} & \bestpm{60.78}{0.44}
& \bestpm{22.74}{1.44} & \bestpm{51.00}{1.15} & \bestpm{60.34}{0.48}
& \bestpm{18.80}{0.34} & \bestpm{41.86}{0.37} & \bestpm{50.47}{0.48} \\
\bottomrule
\end{tabular}
}
\label{tab_gauss}
\end{table*}
\subsubsection{Regression}
We follow the same setting to generate source and target samples $(\mathbf{x},\mathbf{z})$, where $\mathbf{x}$ is drawn from the synthetic geometry and the response is defined by $\mathbf{z}=\beta^\top \mathbf{x}+\varepsilon$. Thus we obtain paired data $(\mathbf{x},\mathbf{z})$ for regression.   Similarly, three regressors are trained with a shared encoder and a linear head. The ERM model and classical OT are optimized on training dataset using the \eqref{loss_erm} and \eqref{loss_ot} respectively. We also include our PG-DRO model minimizing a robust dual objective induced by the hierarchical OT prior. For adaptation, we sample a small number of labeled supports per target class as few-shot supervision and fine-tune accordingly. Evaluation is also carried out exclusively on the target domain. We report Mean Squared Error (MSE), Mean Absolute Error (MAE) set as well as worst-$10\%$ MSE, computed over the $10\%$ largest absolute errors in the target distribution.

We train all models with Adam Optimizer \cite{articleadam} using a short warm-up schedule. Figure \ref{fig_reg} reports the regression performance of PG-DRO across varying numbers of classes and noise types. The results show that PG-DRO consistently achieves competitive results in all settings. As expected, the overall regression error increases as the noise level becomes larger, but the relative advantage of PG-DRO remains clear, demonstrating its robustness to distributional perturbations.

\subsection{Image Classification}
\label{sec:exp_cls}
Beyond the synthetic experiments, we further evaluate the decision quality of PG-DRO on real-world vision benchmarks, including CIFAR-10 and CIFAR-100~\cite{cifar}, as well as mini-ImageNet and tiered-ImageNet \cite{ren2018meta}. Due to space constraints, we present the CIFAR results in the main text and defer the ImageNet-family results to Appendix~\ref{supp:exp}.

We use CIFAR-100 and as the source domain and CIFAR-10 as the target domain, and evaluate adaptation performance under extremely low-label conditions to answer \textbf{RQ1}. Specifically, we first pretrain a ResNet-18 encoder $E_\phi$ on CIFAR-100, and then transfer it to CIFAR-10. For each target class, we randomly sample $S \in \{1,5,10\}$ labeled supports in CIFAR-10 dataset. For each query $\textbf{x}$, we obtain its representation $m = E_\phi(\textbf{x})$, and then compute the logits $\sigma_c(\textbf{x}) = w_c^\top m + \epsilon_c$ according to different strategies.

To benchmark PG-DRO, we compare against two standard robust adaptation strategies. \begin{itemize}
  \item \textbf{Sample Average Approximation (SAA).} Augments each support $(\textbf{x}_i,\textbf{z}_i)$ with perturbations drawn from noise distribution $\mathcal D_{\text{noise}}$, and minimizes empirical risk $\mathbb E_{\eta\sim \mathcal D_{\text{noise}}}\big[\ell(\textbf{z}_i,f_\theta(\textbf{x}_i+\eta))\big]$
  via Monte Carlo approximation.
  \item \textbf{Wasserstein DRO (W-DRO).} Builds on the score $z_c(\textbf{x})$ but replaces the empirical distribution of supports with the worst-case distribution within a Wasserstein ball, thereby minimizing the worst-case expected loss.
\end{itemize}
\vspace{5pt}
For PG-DRO, we regard $\sigma_c(\textbf{x})$ as the objective $\ell$ in \eqref{dual_sinkhorn} and strictly follow the steps of PG-DRO. We first build class-adaptive priors by performing hierarchical OT between base classes in CIFAR-100 and few-shot supports in CIFAR-10. Sinkhorn couplings are aggregated into weights $w_{b,c}$, yielding Gaussian mixture priors $\nu_c = \sum_b w_{b,c}\,\mathcal N(\mu_b,\Sigma_b)$ for each target class. 

For a query $\mathbf{x}$, we kernelize its embedding $m=E_\phi(\mathbf{x})$ against $\nu_c$, obtaining a posterior mixture. Rather than using the plain logit $w_c^\top m+\epsilon_c$, we compute the robust logit $V_c(\mathbf{x};\theta)$ by evaluating the robust objective with respect to the posterior mixture, thereby shaping the decision rule under worst-case perturbations. Training then proceeds with standard cross-entropy \eqref{ce_loss} over $V_c(\mathbf{x})$. Unlike W-DRO, our priors vary across classes and are induced by the adaptive OT, enabling improved robust generalization under low-label conditions, which could exactly answer \textbf{RQ3}.

In addition, we also compare with the simplest baseline, where we directly fine-tune a pretrained ResNet-18 on the few-shot supports from CIFAR-10. To answer \textbf{RQ2}, evaluation is conducted on the standard CIFAR-10 test set on perturbed features where the encoder output is corrupted as $m^\dagger = m + \eta$, with different levels of Gaussian or Laplace noise for testing both generalization and robustness.

To conclude, all experiments are decision-focused and end-to-end. We train by minimizing cross-entropy on the robust logits $V_c(\mathbf x;\theta)$ with few-shot prototypes and therefore compare only to end-to-end baselines with the same training and evaluation interface under identical backbones and compute. Results in Table \ref{tab_laplace} and \ref{tab_gauss} show that PG-DRO consistently outperforms the baselines across different disturbance strengths, various shot settings, and under diverse noise conditions.

\section{Conclusion}
In this work, we introduced Prototype-Guided Distributionally Robust Optimization (PG-DRO), a framework that unifies hierarchical optimal transport priors with entropic DRO. By transferring structure from abundant base classes, PG-DRO constructs adaptive priors that embed robustness directly into the decision. This design enables the model not only to generalize from limited supervision but also to withstand distributional shifts and worst-case perturbations.

Theoretical analysis established that adaptive priors preserve consistency and reduce excess risk compared to the traditional DRO. Empirical results on synthetic and real datasets further validated our framework, showing consistent improvements in both accuracy and robustness under challenging few-shot settings.
Future work will extend PG-DRO to online regimes, updating class-adaptive priors and dual variables on the fly as new supports arrive, with targets of no-regret guarantees under nonstationary shifts.

\bibliography{aistats2026}
\bibliographystyle{plain}
\appendix
\onecolumn
\section{Pseudo-code for PG-DRO}

\begin{algorithm}[h]
\caption{PG-DRO: Prototype-Guided DRO}
\label{alg:pgdro}
\begin{algorithmic}[1]
\REQUIRE Base data $\mathcal{D}_\text{base}$, support set $\mathcal{S}$, model $f_\theta$, cost $c$, radius $\rho$, temperature $\varepsilon$
\STATE \textbf{Phase I: Prototype-Guided Priors}
\STATE Compute statistics $(\mu_b,\Sigma_b)$ for each base class $b$
\STATE Solve hierarchical OT between $\mathcal{S}$ and $\mathcal{D}_\text{base}$ $\rightarrow$ transport plan $T$
\STATE For each novel class $c$: set prior $\nu_c = \sum_b \tilde w_{bc}\mathcal{N}(\mu_b,\Sigma_b)$

\STATE \textbf{Phase II: Robust Training}
\FOR{each minibatch $\{(x,c^\star)\}$}
  \FOR{each sample $x$ and class $c$}
    \STATE Define Gibbs kernel $Q^{\nu_c}_{\varepsilon,x}$ from cost $c$ and prior $\nu_c$
    \STATE Solve 1D convex problem $\min_{\lambda\ge0}\phi_c(\lambda)$
    \STATE Robust logit $V_c(x)=\phi_c(\lambda_c^\star)$
  \ENDFOR
  \STATE Loss $L(x,c^\star)=-\log \frac{e^{V_{c^\star}}}{\sum_c e^{V_c}}$
  \STATE Update $\theta \leftarrow \theta - \eta\nabla_\theta L$
\ENDFOR
\end{algorithmic}
\end{algorithm}

\section{Proof of Theorem 4.2}
\label{supp:proof4.2}

Based on the following assumption, we proved Algorithm 4.2.
\begin{assumption}[Smooth lower level] \label{ass:1}
$C(w)$ is $C^1$ and Lipschitz in a neighborhood of $w^\star$.
\end{assumption}
\begin{assumption}[Interior/positivity] \label{ass:2}
$\tau>0$ and the optimal Sinkhorn coupling at $w^\star$ has strictly positive entries bounded away from zero; also $\min_b w_b^\star\ge \gamma>0$.
\end{assumption}
\begin{assumption}[Few-shot sampling]\label{ass:3}
$\mathbb{E}\|\hat b-b\|_1=\mathcal O(N_t^{-1/2})$ for a support size $N_t$.
\end{assumption}
\begin{assumption}[Regularity for stability of $V$]\label{ass:stab}
Fix $\varepsilon>0$ and define $k_x(y):=\exp(-c(x,y)/\varepsilon)$. There exist constants
$0<m\le M<\infty$, $L_k<\infty$, $D<\infty$, $B_f<\infty$, $L_f<\infty$ such that for every $x$:
\begin{enumerate}
\item $m\le k_x(y)\le M$ and $\mathrm{Lip}_y(k_x)\le L_k$ on $Y$;
\item either $Y$ is compact with $\mathrm{diam}(Y)\le D$, or all measures considered have uniformly bounded first moment so that 1\textnormal{-}Lipschitz test functions are uniformly bounded by $D$;
\item $f_\theta(x,\cdot)$ is bounded and Lipschitz on $Y$: $|f_\theta(x,y)|\le B_f$ and $\mathrm{Lip}_y(f_\theta(x,\cdot))\le L_f$.
\end{enumerate}
\end{assumption}

\textbf{Theorem 4.2} (Core contraction of DRO objective for $V$ under adaptive OT)  
\textit{Fix a target class $c$ and omit the subscript $c$ for brevity. 
Let $w^{(t+1)}=(1-\eta_t)w^{(t)}+\eta_t\widehat T(w^{(t)})$ with $\eta_t\in(0,1]$, 
and define $\nu^{(t)}=\sum_b w^{(t)}_{b}\mu_b$ and $\Delta_t(\theta):=|V_{\nu^{(t)}}(\theta)-V_{\nu^\star}(\theta)|$. 
Suppose the OT map $T$ is locally contractive near $w^\star$ with contraction rate $\kappa\in(0,1)$. 
Then, for all $t$ in the contraction neighborhood,
\begin{equation}
    \Delta_{t+1}(\theta)\ \le\ (1-\eta_t\kappa)\Delta_t(\theta)\ +\ \mathcal O(N^{-1/2}),
\end{equation}
where $\kappa$ is the contraction constant of the population map $T$ around $w^\star$, and $N$ is the number of few-shot samples.}

\begin{proof}

We first prove the following lemma:

\begin{lemma}\label{lem:stability}
Under Assumption~\ref{ass:stab}, there exists a constant $C<\infty$ such that for all mixtures $\mu,\nu$ and all parameters $\theta$,
\begin{equation}\label{eq:stability}
\big|V_\mu(\theta)-V_\nu(\theta)\big|\ \le\ CW_1(\mu,\nu).
\end{equation}
\end{lemma}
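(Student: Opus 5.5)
We prove \eqref{eq:stability} by reducing the dual value to a scalar function of two integrals against the reference measure, bounding each integral's sensitivity in $W_1$ by Kantorovich--Rubinstein duality, and then handling the outer minimization over $\lambda$. Fix $\theta$ and write, for a reference measure $\nu$ and each $x$,
\begin{equation*}
\Phi_\nu(\lambda;x)=\lambda\rho+\lambda\varepsilon\log\frac{\int k_x(y)\,e^{f_\theta(x,y)/(\lambda\varepsilon)}\,\mathrm{d}\nu(y)}{\int k_x(y)\,\mathrm{d}\nu(y)},
\end{equation*}
so that $V_\nu(\theta)=\min_{\lambda\ge0}\mathbb{E}_x\Phi_\nu(\lambda;x)$. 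The elementary fact $|\min_\lambda g-\min_\lambda h|\le\sup_{\lambda\in\Lambda}|g-h|$ holds whenever both minimizers lie in a common set $\Lambda$. It therefore suffices to bound $|\Phi_\mu(\lambda;x)-\Phi_\nu(\lambda;x)|\le C\,W_1(\mu,\nu)$ uniformly over $x$ and over $\lambda$ in a suitable set $\Lambda$, and then take the expectation over $x$.

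\emph{Step 1: localize $\lambda$.} By the coercivity remark after Lemma~\ref{lem:convexity-unique}, $\Phi_\nu(\lambda;x)\ge\lambda\rho-B_f$, since the log-ratio is at least $-B_f/(\lambda\varepsilon)$. In addition, $\Phi_\nu(\lambda;x)\le\lambda\rho+B_f$. Hence any minimizer satisfies $\lambda^\star\le 2B_f/\rho=:\bar\lambda$, uniformly in $\nu$. This gives the upper end of $\Lambda$.

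\emph{Step 2: stability of the log-ratio at fixed $\lambda$.} Set $g_\lambda(y)=k_x(y)e^{f_\theta(x,y)/(\lambda\varepsilon)}$. Its denominator integral is at least $m$ and its numerator integral is at least $m e^{-B_f/(\lambda\varepsilon)}$. The Lipschitz constant of $g_\lambda$ is at most $e^{B_f/(\lambda\varepsilon)}\big(L_k+ML_f/(\lambda\varepsilon)\big)$. Kantorovich--Rubinstein duality (item 2 of Assumption~\ref{ass:stab} makes $W_1$ the relevant metric) gives
\begin{equation*}
\Big|\int g_\lambda\,\mathrm{d}\mu-\int g_\lambda\,\mathrm{d}\nu\Big|\le \mathrm{Lip}(g_\lambda)\,W_1(\mu,\nu),
\end{equation*}
and likewise for $k_x$ with constant $L_k$. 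Since $\log$ is $1/a$-Lipschitz on $[a,\infty)$, this yields
\begin{equation*}
|\Phi_\mu-\Phi_\nu|\le \lambda\varepsilon\Big(\frac{e^{2B_f/(\lambda\varepsilon)}(L_k+ML_f/(\lambda\varepsilon))}{m}+\frac{L_k}{m}\Big)W_1(\mu,\nu).
\end{equation*}

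\emph{Step 3: the small-$\lambda$ regime, which is the main obstacle.} The constant from Step 2 blows up like $e^{2B_f/(\lambda\varepsilon)}$ as $\lambda\to0$, and Lemma~\ref{lem:convexity-unique} even allows $\lambda^\star=0$. I would avoid the exponential by rewriting the log-ratio as a tilted expectation. Because $\lambda\varepsilon\log\int k\,e^{f/(\lambda\varepsilon)}\mathrm{d}\nu$ is a soft-max, its variation along the interpolation $\nu_s=(1-s)\nu+s\mu$ has derivative
\begin{equation*}
\lambda\varepsilon\,\frac{\int g_\lambda\,\mathrm{d}(\mu-\nu)}{\int g_\lambda\,\mathrm{d}\nu_s}.
\end{equation*}
Normalizing $g_\lambda$ by its maximum $M e^{\sup f/(\lambda\varepsilon)}$ bounds the numerator's Lipschitz constant by a quantity of order $L_k/m+L_f/(\lambda\varepsilon)$, so multiplying by $\lambda\varepsilon$ cancels the $1/\lambda$ factor. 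The remaining difficulty is the lower bound on the normalized denominator. This needs the mass of $\nu_s$ near the near-maximizers of $f$, which is exactly where a pure $W_1$ bound is delicate. If this cannot be closed uniformly, the fallback is to restrict to $\lambda\ge\underline\lambda>0$. This is justified in the non-degenerate case, since $\lambda^\star$ stays bounded away from zero under the regularity assumptions. In that case $C$ is the Step 2 constant evaluated on $[\underline\lambda,\bar\lambda]$. Either way, averaging over $x$ and applying the $\min$-perturbation inequality on $\Lambda=[\underline\lambda,\bar\lambda]$ gives \eqref{eq:stability}.
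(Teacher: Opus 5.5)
\textbf{The gap: Step~3.} Several parts of your proposal are correct:
\begin{itemize}
\item Step~1, the uniform bound $\lambda^\star\le 2B_f/\rho$.
\item Step~2, the fixed-$\lambda$ Kantorovich--Rubinstein estimate.
\item The min-perturbation inequality, which is the right way to pass from $\Phi$ to $V$.
\end{itemize}
The paper's own proof does this step differently and invalidly. It fixes an arbitrary $\lambda_0$ and infers $|V_\nu-V_\mu|\le|\Phi_\nu(\lambda_0)-\Phi_\mu(\lambda_0)|$ from the two upper bounds $V_\nu\le\Phi_\nu(\lambda_0)$ and $V_\mu\le\Phi_\mu(\lambda_0)$. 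That inference does not follow, and the paper never asks where the minimizers lie.

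Your fallback asserts that, in the non-degenerate case, $\lambda^\star\ge\underline\lambda>0$ uniformly over all mixtures, all $\theta$ and all $x$. Lemma~\ref{lem:convexity-unique} concerns one fixed reference measure and gives no uniformity. The claim is in fact false. Rescaling $f_\theta\mapsto\delta f_\theta$ already sends $\lambda^\star$ to $\delta\lambda^\star$. More seriously, the small-$\lambda$ regime breaks the inequality itself.

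\textbf{Counterexample.} Take $Y=[0,1]$, $c\equiv 0$ (so $k_x\equiv 1$ and $Q^{\nu}_{\varepsilon,x}=\nu$), and $f(y)=y$. Use the mixtures $\nu_\eta=(1-\eta)\delta_0+\eta\delta_1$ of a fixed two-element dictionary. Assumption~\ref{ass:stab} holds with $m=M=1$, $L_k=0$ and $B_f=L_f=D=1$.
\begin{itemize}
\item The value is $V_{\nu_\eta}=\max\{p:\mathrm{KL}(\mathrm{Ber}(p)\Vert\mathrm{Ber}(\eta))\le\rho/\varepsilon\}$.
\item For $\eta<e^{-\rho/\varepsilon}$ it satisfies $(\rho/\varepsilon-\log\tfrac{1}{1-\eta})/\log\tfrac{1}{\eta}\le V_{\nu_\eta}\le(\rho/\varepsilon+\log 2)/\log\tfrac{1}{\eta}$.
\item The dual minimizer obeys $\lambda^\star\varepsilon\sim 1/\log\tfrac{1}{\eta}\to 0$, although every $\nu_\eta$ with $\eta>0$ is non-degenerate.
\item For $\eta\ge e^{-\rho/\varepsilon}$ one even has $\lambda^\star=0$, since all mass can be moved to the maximizer within the KL budget.
\end{itemize}
Since $W_1(\nu_\eta,\nu_{\eta'})=|\eta-\eta'|$, the difference quotients grow like $1/(\eta\log\tfrac{1}{\eta})$. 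To see this, compare $\nu_\eta$ with $\nu_0=\delta_0$, where $V_{\nu_0}=0$, or with $\nu_{\eta^2}$ when $\rho/\varepsilon>\log 2$.

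\textbf{Consequence.} No sharper interpolation estimate can close Step~3. The bound \eqref{eq:stability} with a constant uniform over all mixtures is false under Assumption~\ref{ass:stab} alone. The obstruction is exactly the one you flagged: a little mass near the maximizers of $f$ is cheap in $W_1$ but moves $V$ a lot when $\lambda^\star$ is small.

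\textbf{What survives.} Your Steps~1--2 prove a conditional statement. Suppose the minimizers for all measures and parameters under consideration lie in $[\underline\lambda,\bar\lambda]$ with $\underline\lambda>0$. Then \eqref{eq:stability} holds with $C$ equal to the supremum of your Step~2 constant over that interval, and $C$ degrades like $e^{2B_f/(\underline\lambda\varepsilon)}$. The lower bound $\underline\lambda$ must be imposed as an explicit extra hypothesis, for instance on a neighbourhood of $w^\star$ where the weights stay bounded below as in Assumption~\ref{ass:2}. It cannot be derived from the regularity assumptions.
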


\begin{proof}
Write the tilted normalization $\Xi^x(\nu)(dy):=\dfrac{k_x(y)\nu(dy)}{\int k_xd\nu}$.

For any $1$-Lipschitz $h:Y\to\mathbb R$,
\begin{equation*}
\int hd(\Xi^x(\nu)-\Xi^x(\mu))
=\frac{\langle hk_x,\nu\rangle}{\langle k_x,\nu\rangle}-\frac{\langle hk_x,\mu\rangle}{\langle k_x,\mu\rangle}
=:A+B.
\end{equation*}
Using $m\le \langle k_x,\cdot\rangle\le M$, $\|h\|_\infty\le D$, and
$\mathrm{Lip}(hk_x)\le M\cdot 1+L_k\cdot D$, we obtain
\begin{equation*}
|A|\le \tfrac{M+L_kD}{m}W_1(\nu,\mu),\qquad
|B|\le \tfrac{DML_k}{m^2}W_1(\nu,\mu).
\end{equation*}
Taking the supremum over $h$ and invoking Kantorovich–Rubinstein duality yields
\begin{equation}\label{eq:XiLip}
W_1\big(\Xi^x(\nu),\Xi^x(\mu)\big)\ \le\ K_xW_1(\nu,\mu),\quad K_x:=\frac{M+L_k D}{m}+\frac{DML_k}{m^2}.
\end{equation}

Fix any $\lambda_0>0$ and set $g(y):=f_\theta(x,y)/(\lambda_0\varepsilon)$.
Then $\mathrm{Lip}(g)=L_f/(\lambda_0\varepsilon)$ and $|g|\le B_f/(\lambda_0\varepsilon)$, so
$\mathrm{Lip}(e^{g})\le e^{B_f/(\lambda_0\varepsilon)}\cdot L_f/(\lambda_0\varepsilon)$ and
$\min\{\mathbb E[e^{g}]\}\ge e^{-B_f/(\lambda_0\varepsilon)}$.
Hence
\begin{equation*}
\Big|\lambda_0\varepsilon\log\mathbb E_{\Xi^x(\nu)}[e^{g}] -\lambda_0\varepsilon\log\mathbb E_{\Xi^x(\mu)}[e^{g}]\Big| \le L_fe^{\frac{2B_f}{\lambda_0\varepsilon}}\ W_1\big(\Xi^x(\nu),\Xi^x(\mu)\big).
\end{equation*}
Combine with \eqref{eq:XiLip} to get
\begin{equation}\label{eq:PhiLip}
\Big|\lambda_0\varepsilon\log\mathbb E_{\Xi^x(\nu)}[e^{g}] -\lambda_0\varepsilon\log\mathbb E_{\Xi^x(\mu)}[e^{g}]\Big| \le L_fe^{\frac{2B_f}{\lambda_0\varepsilon}}\ K_x\ W_1(\nu,\mu).
\end{equation}

By definition of $V$ (the inner minimization over $\lambda>0$), for any fixed $\lambda_0>0$,
\begin{equation*}
V_\nu(x)\ \le\ \lambda_0\rho+\lambda_0\varepsilon\log\mathbb E_{\Xi^x(\nu)}[e^{g}],\qquad
V_\mu(x)\ \le\ \lambda_0\rho+\lambda_0\varepsilon\log\mathbb E_{\Xi^x(\mu)}[e^{g}],
\end{equation*}
so
\begin{equation*}
|V_\nu(x)-V_\mu(x)|\ \le\ \Big|\lambda_0\varepsilon\log\mathbb E_{\Xi^x(\nu)}[e^{g}]-\lambda_0\varepsilon\log\mathbb E_{\Xi^x(\mu)}[e^{g}]\Big|.
\end{equation*}
Apply \eqref{eq:PhiLip} and take $\sup_x K_x$ (finite by Assumption~\ref{ass:stab}) to conclude
\begin{equation*}
|V_\nu(\theta)-V_\mu(\theta)|\ \le\ CW_1(\nu,\mu),
\end{equation*}
with $C$ as stated.
\end{proof}

Then we prove the theorem:

Fix a dictionary of measures $\{\mu_b\}_{b=1}^B$. Any prior is a mixture
$\nu = \sum_{b=1}^B w_b\mu_b$ with $w\in\Delta_B := \{w\ge 0,\ \sum_b w_b=1\}$.
The upper level OT uses entropic OT with cost matrix $C(w)\in\R^{B\times m}$ and kernel
\begin{equation*}
K(w) := \exp\big(-C(w)/\tau\big)\in \R_{++}^{B\times m},\qquad \tau>0.
\end{equation*}
Its Sinkhorn coupling has the form
\begin{equation*}
\Pi(w)=\mathrm{diag}(u)K(w)\mathrm{diag}(v),\quad
u\odot(K(w)v)=w,\quad v\odot(K(w)^\top u)=b,
\end{equation*}
where $b\in\Delta_m$ is the population target marginal. The population map from $w$ to new mixture weights is
\begin{equation*}
T(w)=\Norm\big(R\Pi(w)\one_m\big)\in\Delta_B,
\end{equation*}
with a fixed linear aggregator $R$ (often $R=I$). The empirical map $\widehat T(w)$ replaces $b$ by an empirical $\hat b$.
The algorithm uses the damped update
\begin{equation}\label{eq:update}
w^{(t+1)}=(1-\eta_t)w^{(t)}+\eta_t\widehat T(w^{(t)}),\qquad \eta_t\in(0,1].
\end{equation}
Distances between mixtures are measured by $W_1$.

Write the upper-level Lagrangian
\begin{equation*}
\mathcal L(\Pi,\alpha,\beta;w)=\langle C(w),\Pi\rangle+\tau\sum_{ij}\Pi_{ij}(\log\Pi_{ij}-1)+\alpha^\top(\Pi\one-w)+\beta^\top(\Pi^\top\one-b).
\end{equation*}

Because $\tau>0$, the Hessian in $\Pi$ on the constraint tangent space obeys $\nabla^2_{\Pi\Pi}\succeq \tau\mathrm{Diag}(1/\Pi^\star)\succeq \tau/m_\Pi\cdot I$ with $m_\Pi=\min_{ij}\Pi^\star_{ij}>0$ by Assumption~\ref{ass:2}.
The KKT system $F(\Pi,\alpha,\beta;w)=0$ has a locally unique solution $(\Pi^\star(w),\alpha^\star(w),\beta^\star(w))$ by the implicit function theorem.
Differentiating in $w$ yields
\begin{equation*}
D\Pi^\star(w^\star)= -\big(\nabla^2_{\Pi\Pi}\mathcal L\big)^{-1}\nabla^2_{\Pi w}\mathcal L,
\end{equation*}
so $\|D\Pi^\star(w^\star)\|\le L_C/(\tau/m_\Pi)=:L_\Pi$ by Assumption~\ref{ass:1}.
The map $w\mapsto T(w)=\Norm(R\Pi^\star(w)\one)$ is a composition of linear maps and a smooth normalization in an interior neighborhood, hence
$\|DT(w^\star)\|\le c_{\mathrm{agg}}L_\Pi$ for some finite constant $c_{\mathrm{agg}}$.
Choosing (or verifying) parameters so that $c_{\mathrm{agg}}L_\Pi<1$ and shrinking the neighborhood if necessary gives
$\sup_{w\in\mathcal U}\|DT(w)\|\le 1-\kappa$, which implies 
\begin{equation}\label{eq:pop_contraction}
\|T(w)-T(w^\star)\|_1 \ \le\ (1-\kappa)\|w-w^\star\|_1.
\end{equation}
by the mean-value theorem.

Applying the implicit function argument w.r.t.\ the marginal $b$ gives
$\|\widehat\Pi(w)-\Pi(w)\|\le L_b\|\hat b-b\|_1$.
Aggregation and normalization are Lipschitz in an interior neighborhood, hence
$\|\widehat T(w)-T(w)\|_1\le c_{\mathrm{agg}}L_b\|\hat b-b\|_1$.
Taking expectations and using Assumption~\ref{ass:3} yields \begin{equation}\label{eq:emp_deviation}
\mathbb E\big[\|\widehat T(w)-T(w)\|_1\big|N_t\big]\ \le\ c_T\mathbb E\|\hat b-b\|_1\ =\ \mathcal O(N_t^{-1/2}).
\end{equation}

With a fixed dictionary supported on a set of diameter $D$, we have
$W_1(\sum_b (w_b-w_b')\mu_b)\le (D/2)\|w-w'\|_1$. Set $L_{\mathrm{mix}}:=D/2$ to obtain 
\begin{equation}\label{eq:mix_equiv}
W_1\Big(\sum_b w_b\mu_b,\sum_b w_b'\mu_b\Big)\ \le\ L_{\mathrm{mix}}\|w-w'\|_1,
\end{equation}
where all $w,w'\in\Delta_B$.

By convexity of $W_1$ and the update \eqref{eq:update},
\begin{equation*}
W_1(\nu^{(t+1)},\nu^\star)\le (1-\eta_t)W_1(\nu^{(t)},\nu^\star)+\eta_tW_1(\widehat T(w^{(t)}),\nu^\star).
\end{equation*}
Using the triangle inequality with Steps~1--3 gives
$W_1(\widehat T(w^{(t)}),\nu^\star)\le (1-\kappa)W_1(\nu^{(t)},\nu^\star)+B+\xi_t$,
which yields 
\begin{equation}\label{eq:recursion}
E_{t+1}\ \le\ (1-\eta_t\kappa)E_t+\eta_t\big(B+\xi_t\big),
\qquad \mathbb E[\xi_t|N_t]=\mathcal O(N_t^{-1/2}).
\end{equation}
In particular, if $B=0$ and $N_t\to\infty$ (or $\sum_t \eta_t\mathbb E\xi_t<\infty$), then $E_t\to 0$; with constant $\eta_t\equiv \eta$,
\begin{equation*}
\limsup_{t\to\infty} E_t\ \le\ \frac{B+\overline\xi}{\kappa},\qquad \overline\xi:=\limsup_t \mathbb E\xi_t.
\end{equation*}

Apply \eqref{eq:stability} to transfer bounds from $E_t$ to $\Delta_t(\theta)$; standard epi-convergence arguments then yield convergence of $L_t$ and minimizers.

\begin{equation*}
\Delta_t(\theta):=|V_{\nu^{(t)}}(\theta)-V_{\nu^\star}(\theta)|\ \le\ CE_t,
\end{equation*}
so $\Delta_t(\theta)$ satisfies the same contraction conclusions as $E_t$. Under standard integrability/strict convexity already assumed in the DRO dual, $L_t(\theta)\to L^\star(\theta)$ and (when unique) minimizers $\theta_t\to\theta^\star$.
\end{proof}

\paragraph{Multi-class extension.}
Apply the theorem per class $c$ and sum over $c$; all constants can be chosen uniformly on a neighborhood where each class remains interior.

\section{Proof of Theorem 4.3}

\textbf{Theorem 4.3} (Consistency)
\textit{Assume the ground cost $c(x,y)$ is continuous and bounded below, $f_\theta(\mathbf{x},\mathbf{y})$ is continuous in $\mathbf{x}$, $\mathbf{y}$ and locally Lipschitz in $\theta$, the output space $\mathcal Y$ is compact, then for each $\mathbf{x}$ and class $c$,
\begin{equation}
V_c^{(N)}(\mathbf{x};\theta)\to V_c^\star(\mathbf{x};\theta)~~\text{and}~~
\lambda_c^{(N)}(\mathbf{x})\to\lambda_c^\star(\mathbf{x}),
\end{equation}
where $V_c^{(N)}$ (resp.\ $V_c^\star$) is the Sinkhorn DRO dual with kernel induced by $\nu_c^{(N)}$ (resp.\ $\nu_c^\star$), and $\lambda_c^{(N)}$ (resp.\ $\lambda_c^\star$) is any minimizer of the corresponding dual variable.}

\begin{proof}
By the definition of Gibbs kernel
\begin{equation*}
dQ^\nu_{\varepsilon,x}(y)=
\frac{e^{-c(x,y)/\varepsilon}}
{\int e^{-c(x,u)/\varepsilon}d\nu(u)}d\nu(y).
\end{equation*}
If $\nu^{(N)}\Rightarrow\nu^\star$, then $Q^{\nu^{(N)}}_{\varepsilon,x}\Rightarrow Q^{\nu^\star}_{\varepsilon,x}$. 
Indeed, since $c$ is continuous and bounded below, 
$k_x(y)=e^{-c(x,y)/\varepsilon}$ is bounded and continuous, hence
$\int k_xd\nu^{(N)}\to \int k_xd\nu^\star$. 
For any bounded continuous $g$,
\begin{equation*}
\int gdQ^{\nu^{(N)}}_{\varepsilon,x}
=\frac{\int gk_xd\nu^{(N)}}{\int k_xd\nu^{(N)}}
\to \frac{\int gk_xd\nu^\star}{\int k_xd\nu^\star}
=\int gdQ^{\nu^\star}_{\varepsilon,x}.
\end{equation*}

For simplicity, we denote 
\begin{equation*}
\Phi^{(N)}_c(\lambda;x)
=\lambda\rho+\lambda\varepsilon\log
\mathbb{E}_{y\sim Q^{\nu^{(N)}_c}_{\varepsilon,x}}
\exp\{f(x,y)/(\lambda\varepsilon)\},
\end{equation*}
and similarly $\Phi^\star_c$ with $\nu_c^\star$. Since for any compact $\Lambda\subset(0,\infty)$,
\begin{equation*}
\sup_{\lambda\in\Lambda}
|\Phi^{(N)}_c(\lambda;x)-\Phi^\star_c(\lambda;x)|\to 0.
\end{equation*} $f$ bounded and continuous $\Rightarrow g_\lambda(y)=\exp(f(x,y)/(\lambda\varepsilon))$ bounded continuous and equicontinuous in $\lambda$, so expectations converge uniformly. 

From \cite[Thm.~1(IV), Lem.~EC.5]{wang2025sinkhorn}, we have
\begin{equation*}
\lim_{\lambda\to\infty}\Phi_\nu(\lambda;x)
=\lim_{\lambda\to\infty}\big(\lambda\rho+\lambda\varepsilon \mathbb{E}_x \log \mathbb{E}_{Q_{x,\varepsilon}} e^{f/(\lambda\varepsilon)}\big)
=+\infty\quad (\rho>0),
\end{equation*}
so minimizers cannot escape to infinity. Similarly, the condition for $\lambda^\star=0$ is characterized therein. Thus all minimizers lie in some compact interval $\Lambda\subset(0,\infty)$ independent of $N$. 
Moreover, by lemma 3.1, $\Phi_\nu(\cdot;x)$ is convex in $\lambda$, strictly convex unless $f$ is a.s.\ constant, hence the minimizer is unique (except trivial degeneracy).
Therefore $\Phi^{(N)}_c\to\Phi^\star_c$ uniformly on $\Lambda$. 
By standard convex optimization stability (epi-convergence plus uniqueness), 
\begin{equation*}
\lambda_c^{(N)}(x)\to\lambda_c^\star(x).
\end{equation*}

Since we already proved uniform convergence and argmin convergence, we have
\begin{equation*}
V_c^{(N)}(x)=\Phi^{(N)}_c(\lambda_c^{(N)}(x);x)\to 
\Phi^\star_c(\lambda_c^\star(x);x)=V_c^\star(x).
\end{equation*}

Besides, in the multi-class setting we use the robust logits $V_c(x)$ inside the softmax cross-entropy. Under $f_\theta(x,\cdot)$ is continuous (hence bounded), the terms $\exp(V_c^{(N)})$ are uniformly integrable.  Hence by dominated convergence,
\begin{equation*}
\log\sum_c e^{V_c^{(N)}(x;\theta)}\to 
\log\sum_c e^{V_c^\star(x;\theta)}.
\end{equation*}
Thus $\mathcal L_N(\theta)\to \mathcal L_\star(\theta)$ pointwise and locally uniformly in $\theta$. 
By epi-convergence of convex functions, any limit point of minimizers $\theta_N$ lies in $\arg\min \mathcal L_\star$, and uniqueness implies $\theta_N\to\theta^\star$.
\end{proof}

\section{Hyperparameters}

To produce our experimental results, the hyperparameter are list as follows:

\textbf{Classification: } 
We use $K=8$ classes with input dimensions $d=10$. We sample $n_{\text{train}}=6000$ and $n_{\text{test}}=3000$ points, with the target domain mixture ratio set to $\alpha_{\text{test}}=0.15$. For distributional perturbations, we apply a mean shift of 0.6, covariance scaling of 1.0, and a 15-degree rotation. Optimal transport is regularized with entropic weight 0.2. For DRO, we set the $\varepsilon_{\text{sample}}=1.0$, $\varepsilon_{\text{class}}=0.8$, and balance parameter $\rho$ as 1.0. The model is trained using a hidden dimension of 256, batch size 256, learning rate $10^{-3}$, and 200 epochs, with Newton iterations set to 8. We also report performance on the worst 10\% quantile of test samples to assess robustness.

\textbf{Regression: } 
Similarly as the classification task, we also use $K=8$ classes with input dimension $d=10$. We sample $n_{\text{train}}=6000$ and $n_{\text{test}}=3000$ with target mixture $\alpha_{\text{test}}=0.20$. Shifts are applied with mean shift $0.6$, covariance scaling $1.15$, rotation $15^\circ$. Optimal transport uses entropic weight $0.2$; hierarchical OT priors use $\varepsilon_{\text{sample}}=1.0$, $\varepsilon_{\text{class}}=0.8$ (200 iterations), and covariance inflation $3.0$. The regression targets use $w\sim\mathcal{N}(0,I_{d})$ and additive noise $\varepsilon\sim\mathcal{N}(0,0.5^2)$. Training uses a single hidden layer of size $256$, batch size $256$, learning rate $10^{-3}$, Huber loss with beta $=1.0$, and $200$ epochs. The DRO prior penalty employs temperature $0.1$ and loss weight $\lambda=1.0$. Evaluation reports RMSE, MAE and worst-$10\%$ RMSE.

\textbf{Image Classification:} We use a ResNet-18 encoder with feature dimension $512$ and a linear head. 
Few-shot supports are sampled with $S \in \{1,5,10\}$ per class. Training uses batch size $64$, learning rate $10^{-4}$, and $50$ epochs. In PG-DRO, we set the sample-level softmin temperature $\varepsilon_{\text{sample}}=0.1$, the class-level entropic weight $\varepsilon_{\text{class}}=0.1$ with $100$ Sinkhorn iterations, and a covariance inflation factor of $1.5$. The ambiguity radius is fixed to $\rho=5.0$, and the robust logit solver runs $5$ Newton iterations. 
Evaluation is performed on both clean features and perturbed ones, with Gaussian or Laplace noise of relative radius $\epsilon\in \{1,2,5\}$. 

\section{Additional experimental results}

In the main text we reported results on CIFAR-based transfer tasks. This appendix provides additional few-shot cross-domain image classification results on ImageNet-family benchmarks under feature-space distribution shifts. Following Section \ref{sec:exp_cls}, we evaluate robustness by corrupting the encoder representation with additive Laplace or Gaussian noise. We report mean accuracy (in \%) $\pm$ standard deviation over repeated runs for $k\in\{1,5,10\}$ labeled supports per class.

Overall, PG-DRO consistently achieves the best robust accuracy across all three transfer settings and both noise types. The advantage over robust baselines typically grows as more supports are available, which aligns with the motivation that adaptive OT priors can better leverage limited support information to form class-specific reference distributions for Sinkhorn DRO. Besides, we can observe when using more complex and stronger datasets, the performance gains of PG-DRO become more pronounced over merely simple CIFAR-10 based baselines, which further demonstate the advantage of PG-DRO.

\begin{table*}[h]
\centering
\caption{Accuracy comparison for miniImageNet$\rightarrow$CIFAR-100 with noise}

\begin{tabular}{@{}l *{3}{c} *{3}{c}@{}}
\toprule
\multicolumn{1}{c}{} &
\multicolumn{3}{c}{Laplace2} &
\multicolumn{3}{c}{Gaussian2} \\
\cmidrule(lr){2-4}\cmidrule(lr){5-7}
Method & $k{=}1$ & $k{=}5$ & $k{=}10$ & $k{=}1$ & $k{=}5$ & $k{=}10$ \\
\midrule
Few-shot
& \meanpm{1.05}{0.13} & \meanpm{7.95}{0.27} & \meanpm{11.55}{0.30}
& \meanpm{1.12}{0.10} & \meanpm{8.02}{0.21} & \meanpm{12.10}{0.26} \\
SAA
& \meanpm{4.95}{0.20} & \meanpm{12.14}{0.57} & \meanpm{17.32}{0.47}
& \meanpm{4.84}{0.14} & \meanpm{12.49}{0.57} & \meanpm{18.13}{0.49} \\
W-DRO
& \meanpm{6.93}{0.31} & \meanpm{14.53}{0.71} & \meanpm{22.97}{0.71}
& \meanpm{6.10}{0.21} & \meanpm{13.61}{1.17} & \meanpm{21.37}{1.08} \\
PG-DRO
& \bestpm{7.45}{0.36} & \bestpm{18.78}{0.64} & \bestpm{28.11}{0.62}
& \bestpm{7.84}{0.42} & \bestpm{17.86}{0.52} & \bestpm{26.90}{0.48} \\
\bottomrule
\end{tabular}

\end{table*}

\begin{table*}[h]
\centering
\caption{Accuracy comparison for tieredImageNet$\rightarrow$CIFAR-100 with Noise}
\begin{tabular}{@{}l *{3}{c} *{3}{c}@{}}
\toprule
\multicolumn{1}{c}{} &
\multicolumn{3}{c}{Laplace2} &
\multicolumn{3}{c}{Gaussian2} \\
\cmidrule(lr){2-4}\cmidrule(lr){5-7}
Method & $k{=}1$ & $k{=}5$ & $k{=}10$ & $k{=}1$ & $k{=}5$ & $k{=}10$ \\
\midrule
Few-shot
& \meanpm{1.01}{0.18} & \meanpm{8.41}{0.25} & \meanpm{11.61}{0.35}
& \meanpm{1.15}{0.14} & \meanpm{7.93}{0.29} & \meanpm{11.96}{0.21} \\
SAA
& \meanpm{6.20}{0.16} & \meanpm{23.94}{0.35} & \meanpm{31.15}{0.25}
& \meanpm{5.40}{0.45} & \meanpm{24.65}{0.37} & \meanpm{32.58}{0.32} \\
W-DRO
& \meanpm{7.06}{0.29} & \meanpm{25.66}{0.58} & \meanpm{34.85}{0.30}
& \meanpm{6.92}{0.33} & \meanpm{24.31}{0.58} & \meanpm{34.35}{0.51} \\
PG-DRO
& \bestpm{9.21}{0.47} & \bestpm{29.95}{0.44} & \bestpm{40.53}{0.36}
& \bestpm{9.26}{0.24} & \bestpm{30.25}{0.41} & \bestpm{40.05}{0.04} \\
\bottomrule
\end{tabular}

\end{table*}

\begin{table*}[h]
\centering
\caption{Accuracy comparison for tieredImageNet$\rightarrow$miniImageNet with Noise}
\begin{tabular}{@{}l *{3}{c} *{3}{c}@{}}
\toprule
\multicolumn{1}{c}{} &
\multicolumn{3}{c}{Laplace2} &
\multicolumn{3}{c}{Gaussian2} \\
\cmidrule(lr){2-4}\cmidrule(lr){5-7}
Method & $k{=}1$ & $k{=}5$ & $k{=}10$ & $k{=}1$ & $k{=}5$ & $k{=}10$ \\
\midrule
Few-shot
& \meanpm{1.12}{0.09} & \meanpm{9.43}{0.14} & \meanpm{13.99}{0.34}
& \meanpm{1.03}{0.08} & \meanpm{9.10}{0.13} & \meanpm{14.03}{0.36} \\
SAA
& \meanpm{20.57}{0.41} & \meanpm{30.89}{0.76} & \meanpm{39.60}{1.28}
& \meanpm{15.37}{0.27} & \meanpm{27.47}{1.55} & \meanpm{38.15}{0.32} \\
W-DRO
& \meanpm{19.79}{0.43} & \meanpm{33.88}{0.37} & \meanpm{41.87}{0.59}
& \meanpm{16.48}{0.93} & \meanpm{32.30}{0.35} & \meanpm{43.77}{0.12} \\
PG-DRO
& \bestpm{22.10}{0.75} & \bestpm{40.04}{0.39} & \bestpm{49.66}{0.70}
& \bestpm{19.95}{0.44} & \bestpm{39.73}{1.15} & \bestpm{50.84}{0.85} \\
\bottomrule
\end{tabular}
\end{table*}

\label{supp:exp}
\end{document}